\documentclass{article}

\usepackage[final,nonatbib]{nips_2018}
\usepackage{color}
\usepackage{sgame, tikz}

\usepackage[utf8]{inputenc} 
\usepackage[T1]{fontenc}    
\usepackage{hyperref}       
\usepackage{url}            
\usepackage{booktabs}       
\usepackage{nicefrac}       

\usepackage{microtype, xcolor}
\usepackage{subfigure}
\usepackage{booktabs} 
\usepackage{hyperref}

\usepackage{url}
\usepackage{mathtools}
\mathtoolsset{
showonlyrefs=true 
}
\usepackage{amsmath, bbm, amsthm, amsfonts, amssymb, cite, enumerate}
\usepackage{cancel}
\usepackage{datetime,comment}

\definecolor{darkgreen}{RGB}{0,125,0}
\newcounter{mlNoteCounter}

\newcounter{dbaCounter}

\DeclareMathAlphabet\mathbfcal{OMS}{cmsy}{b}{n}

\DeclareMathOperator*{\im}{im}

\DeclareMathOperator*{\grad}{grad}
\DeclareMathOperator*{\rot}{rot}

\DeclareMathOperator*{\curl}{curl}

\DeclareMathOperator*{\logit}{logit}

\DeclareMathOperator*{\dva}{{{div}_a}}
\DeclareMathOperator*{\dve}{{{div}_e}}
\DeclareMathOperator*{\dv}{{div}}

\newcommand*\bLambda{\ensuremath{\boldsymbol\Lambda}}

\newcommand*\bOmega{\ensuremath{\boldsymbol\Omega}}

\newcommand{\bR}{{\mathbb R}}

\newcommand{\eval}{{\mathcal E}}

\newcommand{\ba}{{\mathbf a}}
\newcommand{\bbb}{{\mathbf b}}
\newcommand{\bc}{{\mathbf c}}
\newcommand{\bd}{{\mathbf d}}
\newcommand{\be}{{\mathbf e}}

\newcommand{\bn}{{\mathbf n}}
\newcommand{\bp}{{\mathbf p}}
\newcommand{\bP}{{\mathbf P}}
\newcommand{\bq}{{\mathbf q}}
\newcommand{\br}{{\mathbf r}}

\newcommand{\bs}{{\mathbf s}}
\newcommand{\bu}{{\mathbf u}}
\newcommand{\bv}{{\mathbf v}}
\newcommand{\wt}{{\mathbf w}}

\newcommand{\bA}{{\mathbf A}}
\newcommand{\bB}{{\mathbf B}}
\newcommand{\bC}{{\mathbf C}}
\newcommand{\bD}{{\mathbf D}}

\newcommand{\bQ}{{\mathbf Q}}
\newcommand{\bT}{{\mathbf T}}
\newcommand{\bU}{{\mathbf U}}
\newcommand{\bV}{{\mathbf V}}
\newcommand{\bS}{{\mathbf S}}
\newcommand{\bZ}{{\mathbf 0}}
\newcommand{\bO}{{\mathbf 1}}

\newtheorem{thm}{Theorem}
\newtheorem*{thm*}{Theorem}
\newtheorem{prop}{Proposition}
\newtheorem{prop*}{Proposition}
\newtheorem{lem}{Lemma}
\newtheorem{defn}{Definition}

\theoremstyle{remark}
\newtheorem{rem}{Remark}
\newtheorem{eg}{Example}

\title{Re-evaluating Evaluation}

\author{
  David Balduzzi\thanks{DeepMind. Email: \texttt{\{\,dbalduzzi\,|\,karltuyls\,|\,perolat\,|\,thore\,\}@google.com}}
   \And Karl Tuyls$^*$
   \And Julien Perolat$^*$
   \And Thore Graepel$^*$
  }

\begin{document}

\maketitle

\begin{abstract}
    \emph{``What we observe is not nature itself, but nature exposed to our method of questioning.''}
    -- Werner Heisenberg
    \vspace{1mm}
    
    Progress in machine learning is measured by careful evaluation on problems of outstanding common interest. However, the proliferation of benchmark suites and environments, adversarial attacks, and other complications has diluted the basic evaluation model by overwhelming researchers with choices. Deliberate or accidental cherry picking is increasingly likely, and designing well-balanced evaluation suites requires increasing effort.
    In this paper we take a step back and propose \emph{Nash averaging}.
    The approach builds on a detailed analysis of the \emph{algebraic structure of evaluation} in two basic scenarios: agent-vs-agent and agent-vs-task. 
    The key strength of Nash averaging is that it automatically adapts to redundancies in evaluation data, so that results are not biased by the incorporation of easy tasks or weak agents. Nash averaging thus encourages maximally inclusive evaluation -- since there is no harm (computational cost aside) from including all available tasks and agents. 
\end{abstract}

\section{Introduction}

Evaluation is a key driver of progress in machine learning, with e.g. ImageNet \cite{deng:09} and the Arcade Learning Environment \cite{BellemareNVB13} enabling  subsequent breakthroughs in supervised and reinforcement learning \cite{krizhevsky:12, mnih:15}. However, developing evaluations has received little \emph{systematic} attention compared to developing algorithms. Immense amounts of compute is continually expended smashing algorithms and tasks together -- \emph{but the results are almost never used to \textbf{evaluate and optimize evaluations}}. In a striking asymmetry, results are almost exclusively applied to evaluate and optimize algorithms.

The classic train-and-test paradigm on common datasets, which has served the community well \cite{donoho:15}, is reaching its limits. Three examples suffice. Adversarial attacks have complicated evaluation, raising questions about which attacks to test against \cite{szegedy:13, tramer:18, kurakin:18, uesato:18}. Training agents far beyond human performance with self-play means they can only really be evaluated against each other \cite{silver:17, silver:17a}. The desire to build increasingly general-purpose agents has led to a proliferation of environments: Mujoco, DM Lab, Open AI Gym, Psychlab and others \cite{todorov:12,  beattie:16, brockman:16, leibo:18}.

In this paper we pause to ask, and partially answer, some basic questions about evaluation: 
\textbf{Q1.} Do tasks test what we think they test? 
\textbf{Q2.} When is a task redundant?
\textbf{Q3.} Which tasks (and agents) matter the most?
\textbf{Q4.} How should evaluations be evaluated? 

We consider two scenarios: \emph{agent vs task} (AvT), where algorithms are evaluated on suites of datasets or environments; and \emph{agent vs agent} (AvA), where agents compete directly as in Go and Starcraft. Our goal is to treat tasks and agents symmetrically -- with a view towards, ultimately, co-optimizing agents and evaluations. From this perspective AvA, where the task is (beating) another agent, is especially interesting. Performance in AvA is often quantified using Elo ratings \cite{Elo78} or the closely related TrueSkill \cite{herbrich:07}. There are two main problems with Elo. Firstly, Elo bakes-in the assumption that relative skill is transitive; but Elo is meaningless -- it has no predictive power -- in cyclic games like rock-paper-scissors. Intransitivity has been linked to biodiversity in ecology, and may be useful when evolving populations of agents \cite{frean:01, kerr:02, laird:06, szolnoki:14}. Secondly, an agent's Elo rating can be inflated by instantiating many copies of an agent it beats (or conversely). This can cause problems when Elo guides hyper-optimization methods like population-based training \cite{jaderberg:17b}. Similarly, the most important decision when constructing a task-suite is which tasks to include. It is easy, and all too common, to bias task-suites in favor of particular agents or algorithms. 

\subsection{Overview}
Section~\ref{s:pre} presents background information on Elo and tools for working with antisymmetric matrices, such as the Schur decomposition and combinatorial Hodge theory. A major theme underlying the paper is that the fundamental algebraic structure of tournaments and evaluation is antisymmetric \cite{dgm:18}. Techniques specific to antisymmetric matrices are less familiar to the machine learning community than approaches like PCA that apply to symmetric matrices and are typically correlation-based. 

Section~\ref{s:alg} presents a unified approach to representing evaluation data, where agents and tasks are treated symmetrically. A basic application of the approach results in our first contribution: a \textbf{multidimensional Elo rating} (mElo) that handles cyclic interactions. We also sketch how the Schur decomposition can uncover latent skills and tasks, providing a partial answer to \textbf{Q1}. We illustrate mElo on the domain of training an AlphaGo agent \cite{DSilverHMGSDSAPL16}.

The second contribution of the paper is \textbf{Nash averaging}, an evaluation method that is \textbf{invariant} to redundant tasks and agents, see section~\ref{s:meta}. The basic idea is to play a meta-game on evaluation data \cite{lanctotl:17}. The meta-game has a unique maximum entropy Nash equilibrium. The key insight of the paper is that the maxent Nash adapts automatically to the presence of redundant tasks and agents. The maxent Nash distribution thus provides a principled answer to \textbf{Q2} and \textbf{Q3}: which tasks and agents do and do not matter is determined by a meta-game. Finally, expected difficulty of tasks under the Nash distribution on agents yields a partial answer to \textbf{Q4}. 
The paper concludes by taking a second look at the performance of agents on Atari. We find that, under Nash averaging, human performance ties with the best agents, suggesting better-than-human performance has not yet been achieved.

\subsection{Related work}
Legg and Hutter developed a definition of intelligence which, informally, states ``intelligence measures an agent's ability to achieve goals in a wide range of environments'' \cite{legg:05, legg:13}. Formally, they consider all computable tasks weighted by algorithmic complexity \cite{solomonoff:64, kolmogorov:65, chaitin:66}. Besides being incomputable, the distribution places (perhaps overly) heavy weight on the simplest tasks.

A comprehensive study of performance metrics for machine learning and AI can be found in \cite{ferri:09, hernandez:12, hernandez:17, hernandez:17a, Olson2017}. There is a long history of psychometric evaluation in humans, some of which has been applied in artificial intelligence \cite{spearman:04, woolley:10, bringsjord:11}. Bradley-Terry models provide a general framework for pairwise comparison \cite{hunter:04}. Researchers have recently taken a second look at the arcade learning environment \cite{BellemareNVB13} and introduced new performance metrics \cite{Machado17a}. However, the approach is quite particular. Recent work using agents to evaluate games has somewhat overlapping motivation with this paper \cite{liapis:13, horn:14, nielsen:15, silva:17, volz:18}. Item response theory is an alternative, and likely complementary, approach to using agents to evaluate tasks \cite{hambleton:91} that has recently been applied to study the performance of agents on the Arcade Learning Environment  \cite{martinez:17}. 

Our approach draws heavily on work applying combinatorial Hodge theory to statistical ranking \cite{jiang:11} and game theory \cite{candogan:11, candogan:13, candogan:13a}. We also draw on empirical game theory \cite{Walsh03, Wellman06}, by using a meta-game to ``evaluate evaluations'', see section~\ref{s:meta}. Empirical game theory has been applied to domains like poker and continuous double auctions, and has recently been extended to asymmetric games \cite{PhelpsPM04, PhelpsCMNPS07, PonsenTKR09, BloembergenTHK15, Tuyls18}. \emph{von Neumann winners} in the dueling bandit setting and NE-regret are related to Nash averaging \cite{dudik:15, balsubramani:16, Jordan:07, Jordan:10}.

\section{Preliminaries}
\label{s:pre}

\textbf{Notation.} Vectors are column vectors. $\bZ$ and $\bO$ denote the constant vectors of zeros and ones respectively. We sometimes use subscripts to indicate dimensions of vectors and matrices, e.g. $\br_{n\times1}$ or $\bS_{m\times n}$ and sometimes their entries, e.g. $\br_i$ or $\bS_{ij}$; no confusion should result. The unit vector with a 1 in coordinate $i$ is $\be_i$. Proofs and code are in the appendix.

\subsection{The Elo rating system}
\label{s:elo}

Suppose $n$ agents play a series of pairwise matches against each other. Elo assigns a rating $r_i$ to each player $i\in[n]$ based on their wins and losses, which we represent as an $n$-vector $\br$. The predicted probability of $i$ beating $j$ given their Elo ratings is 
\begin{equation}
    \label{eq:elo_probability}
    \hat{p}_{ij} := \frac{10^{r_i/400}}{10^{r_i/400} + 10^{r_j/400}}
    = \sigma (\alpha r_i - \alpha r_j ),
    \quad\text{where}\quad
    \sigma(x) = \frac{1}{1 + e^{-x}}
    \quad\text{and}\quad
    \alpha = \frac{\log(10)}{400}.
\end{equation}
The constant $\alpha$ is not important in what follows, so we pretend $\alpha=1$. Observe that only the \emph{difference} between Elo ratings affects win-loss predictions. We therefore impose that Elo ratings sum to zero, $\br^\intercal\bO=0$, without loss of generality.
Define the loss,
\begin{equation}
    \label{eq:elo_loss}
    \ell_\text{Elo}(p_{ij}, \hat{p}_{ij}) = -p_{ij} \log \hat{p}_{ij} - (1 - p_{ij})\log(1-\hat{p}_{ij}),
    \quad\text{where}\quad
    \hat{p}_{ij} = \sigma(r_i-r_j)
\end{equation}
and $p_{ij}$ is the true probability of $i$ beating $j$. Suppose the $t^\text{th}$ match pits player $i$ against $j$, with outcome $S_{ij}^t=1$ if $i$ wins and $S_{ij}^t=0$ if $i$ loses. 
Online gradient descent on $\ell_\text{Elo}$ obtains
\begin{equation}
    \label{eq:elo_update}
    r_i^{t+1}\leftarrow r_i^t - \eta\cdot \nabla_{r_i}\ell_\text{Elo}(S^t_{ij}, \hat{p}^t_{ij}) 
    = r_i^t + \eta\cdot(S^t_{ij} - \hat{p}^t_{ij}).
\end{equation}
Choosing learning rate $\eta=16$ or $32$ recovers the updates introduced by Arpad Elo in \cite{Elo78}.

The win-loss probabilities predicted by Elo ratings can fail in simple cases. For example, rock, paper and scissors will all receive the same Elo ratings. Elo's predictions are  $\hat{p}_{ij}=\frac{1}{2}$ for all $i,j$ -- and so Elo has no predictive power for any given pair of players (e.g. paper beats rock with $p=1$).

\textbf{What are the Elo update's fixed points?}
Suppose we batch matches to obtain empirical estimates of the probability of player $i$ beating $j$: $\bar{p}_{ij} = \sum_{n}\frac{S_{ij}^n}{N_{ij}}$. As the number of matches approaches infinity, the empirical estimates approach the true probabilities $p_{ij}$. 
\begin{prop}\label{prop:elo_fp}
    Elo ratings are at a stationary point under batch updates iff the matrices of empirical probabilities and predicted probabilities have the same row-sums (or, equivalently the same column-sums):
    \begin{equation}
    \label{eq:elo_fp}
        \nabla_{r_i}\left[ \sum_j\ell_\text{Elo}(\bar{p}_{ij},\hat{p}_{ij}) \right]= 0
        \;\forall i \quad\text{iff}\quad
        \sum_j \bar{p}_{ij} = \sum_j \hat{p}_{ij}\;\forall i.
    \end{equation}
\end{prop}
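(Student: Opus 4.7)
The plan is a direct chain-rule calculation followed by a symmetry observation; I do not expect any real obstacle. Fix $i$ and treat $L_i(\br) := \sum_j \ell_\text{Elo}(\bar{p}_{ij},\hat{p}_{ij})$ as a function of $r_i$ through the dependence $\hat{p}_{ij}=\sigma(r_i-r_j)$. The term with $j=i$ is constant (and may be dropped by convention), so only the off-diagonal entries matter.

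First I would differentiate one summand. Using the standard identities $\sigma'(x)=\sigma(x)(1-\sigma(x))$ and
\[
\frac{\partial \ell_\text{Elo}(\bar{p}_{ij},\hat{p}_{ij})}{\partial \hat{p}_{ij}} \;=\; \frac{\hat{p}_{ij}-\bar{p}_{ij}}{\hat{p}_{ij}(1-\hat{p}_{ij})},
\]
the factor $\hat{p}_{ij}(1-\hat{p}_{ij})$ cancels against $\partial \hat{p}_{ij}/\partial r_i=\hat{p}_{ij}(1-\hat{p}_{ij})$, yielding the clean expression
\[
\frac{\partial \ell_\text{Elo}(\bar{p}_{ij},\hat{p}_{ij})}{\partial r_i} \;=\; \hat{p}_{ij}-\bar{p}_{ij}.
\]
This is the same cancellation that produces the update in~\eqref{eq:elo_update}. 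Summing over $j$ gives $\nabla_{r_i} L_i(\br) = \sum_j(\hat{p}_{ij}-\bar{p}_{ij})$, and setting it to zero for every $i$ is exactly the row-sum equality in~\eqref{eq:elo_fp}.

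For the parenthetical equivalence with column-sums, I would observe that both matrices are \emph{reciprocal}: $\bar{p}_{ij}+\bar{p}_{ji}=1$ by definition of a pairwise win probability, and $\hat{p}_{ij}+\hat{p}_{ji}=\sigma(r_i-r_j)+\sigma(r_j-r_i)=1$. Hence the difference matrix $\bD=\hat{\bP}-\bar{\bP}$ is antisymmetric, so row $i$ of $\bD$ is the negative of column $i$ of $\bD$, and the row-sum condition $\sum_j \bD_{ij}=0$ holds for all $i$ iff the column-sum condition $\sum_j \bD_{ji}=0$ holds for all $i$. This finishes the proof.

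The only ``step'' that might be called nontrivial is the sigmoid/cross-entropy cancellation, but it is textbook; there is no conceptual obstacle here — the proposition is essentially a restatement of the moment-matching character of logistic regression applied to Elo.
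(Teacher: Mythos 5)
Your proof is correct and follows essentially the same route as the paper's: a direct chain-rule computation exploiting the cross-entropy/sigmoid cancellation to show the gradient in $r_i$ equals $\sum_j(\hat{p}_{ij}-\bar{p}_{ij})$ (your sign is the right one, consistent with the update in~\eqref{eq:elo_update}; the paper's appendix writes the negation, which is immaterial for the iff). Your antisymmetry argument for the row-sum/column-sum equivalence is a small but welcome addition that the paper leaves implicit.
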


Many different win-loss probability matrices result in identical Elo ratings. The situation is analogous to how many different joint probability distributions can have the same marginals. We return to this topic in section~\ref{s:ava}.

\subsection{Antisymmetric matrices}
\label{s:anti}

We recall some basic facts about antisymmetric matrices. Matrix $\bA$ is antisymmetric if $\bA+\bA^\intercal=0$. Antisymmetric matrices have even rank and \emph{imaginary} eigenvalues  $\{\pm i\lambda_j\}_{j=1}^{\text{rank}(\bA)/2}$. Any antisymmetric matrix $\bA$ admits a real \textbf{Schur decomposition}:
\begin{equation}
  \bA_{n\times n} = \bQ_{n\times n}\cdot \bLambda_{n\times n}\cdot \bQ^\intercal_{n\times n},
\end{equation}
where $\bQ$ is orthogonal and $\bLambda$ consists of zeros except for $(2\times 2)$ diagonal-blocks of the form:
\begin{equation}
  \bLambda = \left(\begin{matrix}
    0 & \lambda_j \\
    -\lambda_j & 0
  \end{matrix}\right).
\end{equation}
The entries of $\bLambda$ are \emph{real} numbers, found by multiplying the eigenvalues of $\bA$ by $i=\sqrt{-1}$.

\begin{prop}\label{prop:schur}
  Given matrix $\bS_{m\times n}$ with rank $r$ and singular value decomposition $\bU_{m\times r} \bD_{r\times r}\bV_{r\times n}^\intercal$. Construct antisymmetric matrix 
    \begin{equation}
    \label{eq:anti_naive}
        \bA_{(m+n)\times(m+n)} = 
        \left(\begin{matrix}
            \bZ_{m\times m} & \bS_{m\times n} \\
            -\bS^\intercal_{n\times m} & \bZ_{n\times n}
        \end{matrix}\right).
    \end{equation}
  Then the thin Schur decomposition of $\bA$ is $\bQ_{(m+n)\times 2r}\bLambda_{2r\times 2r}\bQ_{2r\times (m+n)}^\intercal$ where the nonzero pairs in $\bLambda_{2r\times 2r}$ are $\pm$ the singular values in $\bD_{r\times r}$ and
  \begin{equation}
      \bQ_{(m+n)\times 2r} = \left(\begin{matrix}
          -\bu_1 & \bZ_{m\times1} & \cdots & -\bu_r & \bZ_{m\times1} & \\
          \bZ_{n\times1} & \bv_1 & \cdots & \bZ_{n\times1} & \bv_r
      \end{matrix}\right).
  \end{equation}
\end{prop}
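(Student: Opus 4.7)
The plan is a direct verification in three short steps: I would show (i) that the proposed $\bQ$ has orthonormal columns, (ii) that $\bA\bQ = \bQ\bLambda$ via a one-line block computation, and then (iii) upgrade this to the full Schur factorization $\bA = \bQ\bLambda\bQ^\intercal$ using a rank count. Conceptually, the antisymmetric matrix $\bA$ acts on a split vector by swapping its two halves and applying $\bS$ or $-\bS^\intercal$ on each side; the SVD identities $\bS\bv_k = d_k \bu_k$ and $\bS^\intercal \bu_k = d_k \bv_k$ guarantee that this swap respects the pairing $(\bu_k, \bv_k)$, so singular-vector pairs assemble naturally into the $2\times 2$ blocks of $\bLambda$.

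For (i), the odd columns $\bq_{2k-1} = (-\bu_k;\,\bZ_{n\times 1})$ live in the top $m$ coordinates and the even columns $\bq_{2k} = (\bZ_{m\times 1};\,\bv_k)$ live in the bottom $n$ coordinates, so the two families are automatically orthogonal via their disjoint block supports. Within each family, orthonormality of $\{\bu_k\}_{k=1}^r$ and $\{\bv_k\}_{k=1}^r$ is inherited from the SVD of $\bS$, so $\bQ^\intercal\bQ = \bI_{2r}$. For (ii), a single block multiplication gives $\bA\bq_{2k-1} = (\bZ;\, d_k \bv_k) = d_k \bq_{2k}$ and $\bA\bq_{2k} = (d_k \bu_k;\,\bZ) = -d_k\bq_{2k-1}$. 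These two identities say precisely that $\bA$ rotates each consecutive column-pair of $\bQ$ by the skew block $\begin{pmatrix} 0 & -d_k \\ d_k & 0 \end{pmatrix}$, which matches the proposition's block form with $\lambda_k = -d_k$. The minus sign in front of $\bu_k$ is what fixes this specific sign, which is also why the proposition only claims the nonzero pairs are $\pm$ the singular values: the opposite sign would appear under the mirrored column convention.

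For (iii), I would argue that the orthonormal columns of $\bQ$ actually span $\mathrm{range}(\bA)$. Antisymmetry forces $\mathrm{rank}(\bA)$ to be even, and the block form gives $\mathrm{rank}(\bA) = 2\,\mathrm{rank}(\bS) = 2r$; by the previous step, each $\bq_k$ equals $\bA$ applied to an adjacent column up to a nonzero scalar, so all $2r$ columns lie in $\mathrm{range}(\bA)$ and therefore form an orthonormal basis of it. Antisymmetry also gives $\ker(\bA) = \mathrm{range}(\bA)^\perp$, so the projector $\bQ\bQ^\intercal$ is the identity on $\mathrm{range}(\bA)$ and zero on $\ker(\bA)$, yielding $\bA = \bA\bQ\bQ^\intercal$; combined with $\bA\bQ = \bQ\bLambda$ this is $\bA = \bQ\bLambda\bQ^\intercal$. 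There is no real obstacle here — everything reduces to the SVD of $\bS$ plus a rank bookkeeping step — and the only subtlety worth flagging is remembering to invoke $\mathrm{rank}(\bA) = 2r$ so that one recovers the \emph{thin} Schur decomposition rather than the full $(m+n)\times(m+n)$ one padded with kernel directions.
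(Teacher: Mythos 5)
Your proof is correct and is essentially the paper's approach: the paper's entire proof is ``Direct computation; multiply out the matrices,'' and you carry out that computation (the paper presumably intends expanding $\bQ\bLambda\bQ^\intercal=\sum_k\lambda_k(\bq_{2k-1}\bq_{2k}^\intercal-\bq_{2k}\bq_{2k-1}^\intercal)$ and recognizing the thin SVD $\bS=\sum_k d_k\bu_k\bv_k^\intercal$ in the off-diagonal blocks, whereas you verify $\bA\bQ=\bQ\bLambda$ and close the argument with the rank count $\mathrm{rank}(\bA)=2r$ and $\ker(\bA)=\mathrm{range}(\bA)^\perp$, but these are the same calculation packaged differently). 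Your explicit sign bookkeeping ($\lambda_k=-d_k$ from the $-\bu_k$ convention) and the rank step are welcome details the paper omits.
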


\textbf{Combinatorial Hodge theory} 
is developed by analogy with differential geometry, see \cite{jiang:11, candogan:11, candogan:13, candogan:13a}. Consider a fully connected graph with vertex set $[n]=\{1,\ldots,n\}$. Assign a \textbf{flow} $\bA_{ij}\in\bR$ to each edge of the graph. The flow in the opposite direction $ji$ is $\bA_{ji} = -\bA_{ij}$, so flows are just $(n\times n)$ antisymmetric matrices. The flow on a graph is analogous to a vector field on a manifold. 

The combinatorial \textbf{gradient} of an $n$-vector $\br$ is the flow: $\grad(\br) := \br\bO^\intercal - \bO\br^\intercal$. Flow $\bA$ is a \textbf{gradient flow} if $\bA=\grad(\br)$ for some $\br$, or equivalently if $\bA_{ij} = \br_i - \br_j$ for all $i,j$.
The \textbf{divergence} of a flow is the $n$-vector $\dv(\bA) := \frac{1}{n}\bA\cdot \bO$. The divergence measures the contribution to the flow of each vertex, considered as a source. The \textbf{curl} of a flow is the three-tensor $\curl(\bA)_{ijk} = \bA_{ij} + \bA_{jk} - \bA_{ik}$. Finally, the \textbf{rotation} is $\rot(\bA)_{ij} = \frac{1}{n}\sum_{k=1}^n \curl(\bA)_{ijk}$.
\begin{thm*}[Hodge decomposition, \cite{jiang:11}]\label{thm:hodge}
    \textbf{(i)} $\dv\circ \grad(\br) = \br$ for any $\br$ satisfying $\br^\intercal\bO=0$.\\
    \textbf{(ii)} $\dv\circ\rot(\bA)=\bZ_{n\times 1}$ for any flow $\bA$. 
    \textbf{(iii)} $\rot\circ \grad(\br) = \bZ_{n\times n}$ for any vector $\br$.\\
    \textbf{(iv)} The vector space of antisymmetric matrices admits an orthogonal decomposition
    \begin{equation}
        \big\{\text{flows}\big\} = \big\{\text{antisymmetric matrices}\big\} 
        = \im(\grad) \oplus \im(\rot)
    \end{equation}
    with respect to the standard inner product $\langle\bA,\bB\rangle = \sum_{ij}\bA_{ij}\bB_{ij}$. Concretely, any antisymmetric matrix decomposes as
    \begin{equation}
        \bA = \big\{\text{transitive component}\big\} + 
        \big\{\text{cyclic component}\big\}
        = \grad(\br) + \rot(\bA)
        \quad\text{where}\quad\br = \dv(\bA).
    \end{equation}
\end{thm*}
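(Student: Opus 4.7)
The plan is to prove the four assertions in order, treating (i)--(iii) as direct computations and then obtaining (iv) as a consequence of a single key identity.

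For (i), I would expand $\grad(\br)_{ij} = r_i - r_j$ and compute
\[
  \dv\circ\grad(\br)_i = \tfrac{1}{n}\sum_j (r_i - r_j) = r_i - \tfrac{1}{n}\br^\intercal\bO = r_i,
\]
which uses the gauge constraint $\br^\intercal\bO = 0$ exactly once. For (iii), I would plug $\grad(\br)$ into $\curl$ and watch the three terms telescope: $(r_i-r_j)+(r_j-r_k)-(r_i-r_k)=0$ for every $k$, whence $\rot\circ\grad(\br)=\bZ$.

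The crux of everything is the following identity, which I would establish first and then reuse:
\[
  \rot(\bA)_{ij} \;=\; \bA_{ij} + \dv(\bA)_j - \dv(\bA)_i \;=\; \bA_{ij} - \grad\bigl(\dv(\bA)\bigr)_{ij}.
\]
This is obtained by expanding $\rot(\bA)_{ij}=\tfrac{1}{n}\sum_k \bigl(\bA_{ij}+\bA_{jk}-\bA_{ik}\bigr)$ and recognizing the last two averages as divergences. From this identity, (ii) follows by taking divergences on both sides: $\dv\circ\rot(\bA) = \dv(\bA) - \dv\circ\grad\circ\dv(\bA) = \dv(\bA)-\dv(\bA) = 0$, where the second equality uses (i) applied to $\dv(\bA)$ (which has zero sum because $\bA$ is antisymmetric, so $\bO^\intercal\bA\bO = 0$).

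For (iv), rearranging the same identity gives the decomposition $\bA = \grad(\dv(\bA)) + \rot(\bA)$ with $\br = \dv(\bA)$, so existence is immediate. Orthogonality follows from the discrete ``integration by parts'' formula
\[
  \langle \grad(\br),\bB\rangle \;=\; \sum_{ij}(r_i-r_j)\bB_{ij} \;=\; 2n\,\langle \br,\dv(\bB)\rangle,
\]
valid whenever $\bB$ is antisymmetric (which is what lets me combine the two sums). Applied with $\bB=\rot(\bA')$, this and (ii) give $\langle\grad(\br),\rot(\bA')\rangle = 0$, proving $\im(\grad)\perp\im(\rot)$. Uniqueness of the decomposition then follows from orthogonality, finishing the direct sum claim. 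The only delicate step is keeping the $\tfrac{1}{n}$ normalizations consistent so that the rot-minus-grad-of-div identity lands exactly on $\bA_{ij}$ with coefficient $1$; after that, the rest of the theorem unwinds mechanically.
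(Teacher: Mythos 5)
Your proof is correct. The paper does not prove this theorem itself --- it is imported from Jiang et al.\ \cite{jiang:11} --- so there is no in-paper argument to compare against; your derivation is the standard one for the complete-graph case and all the steps check out. The pointwise identity $\rot(\bA)_{ij} = \bA_{ij} - \grad(\dv(\bA))_{ij}$ does indeed carry the whole theorem: it gives existence of the decomposition immediately, yields (ii) via (i) applied to $\dv(\bA)$ (using $\bO^\intercal\bA\bO=0$), and combines with the adjointness relation $\langle\grad(\br),\bB\rangle = 2n\,\langle\br,\dv(\bB)\rangle$ (valid for antisymmetric $\bB$, as you note) to give orthogonality and hence uniqueness. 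The normalizations are consistent: the $\tfrac{1}{n}$ in both $\dv$ and $\rot$ makes the identity land on $\bA_{ij}$ with coefficient exactly $1$.
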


\textbf{Sneak peak.}
The divergence recovers Elo ratings or just plain average performance depending on the scenario. The Hodge decomposition separates transitive  (captured by averages or Elo) from cyclic interactions (rock-paper-scissors), and explains when Elo ratings make sense. The Schur decomposition is a window into the latent skills and tasks not accounted for by Elo and averages.

\section{On the algebraic structure of evaluation}
\label{s:alg}

The Schur decomposition and combinatorial Hodge theory provide a unified framework for analyzing evaluation data in the AvA and AvT scenarios. In this section we provide some basic tools and present a multidimensional extension of Elo that handles cyclic interactions.

\subsection{Agents vs agents (AvA)}
\label{s:ava}

In AvA, results are collated into a matrix of win-loss probabilities based on relative frequencies. Construct  $\bA=\logit \bP$ with $\bA_{ij} := \log\frac{p_{ij}}{1-p_{ij}}$. Matrix $\bA$ is antisymmetric since $p_{ij} + p_{ji}=1$.

\textbf{When can Elo correctly predict win-loss probabilities?}
The answer is simple in logit space: 
\begin{prop}\label{prop:when_elo_works}
    \textbf{(i)} If probabilities $\bP$ are generated by Elo ratings $\br$ then the divergence of its logit is $\br$. That is,
    \begin{equation}
        \text{if }p_{ij}=\sigma(r_i-r_j)\,\,\forall i,j\,\text{ then }\dv(\logit\bP) = \Big(\frac{1}{n}\sum_{j=1}^n (r_i-r_j)\Big)_{i=1}^n
        =\br.
    \end{equation}
    \textbf{(ii)} There is an Elo rating that generates probabilities $\bP$ iff $\curl(\logit \bP)=0$. Equivalently, iff $\log \frac{p_{ij}}{p_{ji}} + \log \frac{p_{jk}}{p_{kj}} + \log \frac{p_{ki}}{p_{ik}} = 0$ for all $i,j,k$.
\end{prop}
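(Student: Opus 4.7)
My plan is to reduce the proposition to facts already in the excerpt via the identity $\logit \circ \sigma(x) = x$, which linearizes the nonlinear statement about probability matrices into a statement about antisymmetric flows. First I would observe that if $p_{ij} = \sigma(r_i - r_j)$ for all $i, j$, then $\logit p_{ij} = r_i - r_j$, so $\logit \bP = \grad(\br)$. Part (i) then follows directly from part (i) of the Hodge decomposition theorem, which gives $\dv(\grad(\br)) = \br$ under the WLOG normalization $\br^\intercal \bO = 0$ adopted in section~\ref{s:elo}.

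The forward direction of (ii) reuses the same observation: $\logit \bP = \grad(\br)$ reduces $\curl(\logit \bP) = 0$ to the one-line telescoping calculation $(r_i - r_j) + (r_j - r_k) - (r_i - r_k) = 0$, i.e., $\curl \circ \grad = 0$ at the level of individual triples.

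The converse of (ii) is the substantive step, so I would argue directly rather than invoke the full Hodge decomposition. Set $\bA := \logit \bP$ and assume $\curl(\bA) = 0$, i.e., $\bA_{ij} + \bA_{jk} - \bA_{ik} = 0$ for all $i, j, k$. Fix a reference vertex, say $k = 1$, and define $s_i := \bA_{i1}$; the triangle identity rearranges to $\bA_{ij} = s_i - s_j$, so $\bA = \grad(\bs)$. Centering via $\br := \bs - \tfrac{1}{n}(\bO^\intercal \bs)\bO$ then produces Elo ratings that sum to zero and satisfy $p_{ij} = \sigma(r_i - r_j)$. This is a discrete analog of ``closed forms are exact on simply connected domains'': the complete graph is simply connected in the sense that every triangle is filled in, and the potential $\bs$ is recovered by integrating $\bA$ along edges out of the reference vertex.

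Finally, to match the stated scalar identity I would expand the definition of $\curl(\logit \bP)_{ijk}$ and use $p_{ji} = 1 - p_{ij}$, which gives $\logit p_{ij} = \log(p_{ij}/p_{ji})$ and in particular $-\logit p_{ik} = \log(p_{ki}/p_{ik})$; the three summands in the definition of $\curl$ then align term-by-term with the three log-ratios $\log(p_{ij}/p_{ji}) + \log(p_{jk}/p_{kj}) + \log(p_{ki}/p_{ik})$, so the tensor condition and the scalar condition are the same condition. The only place that requires genuine thought is the converse of (ii); everything else is either an application of the Hodge decomposition or routine bookkeeping against the logit/sigmoid inverse pair.
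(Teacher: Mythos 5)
Your proof is correct. For part (i) and the forward direction of (ii) it is the same route the paper takes: $\logit\circ\sigma(x)=x$ gives $\logit\bP=\grad(\br)$, after which (i) is exactly Hodge~(i) together with the normalization $\br^\intercal\bO=0$, and the forward implication is the telescoping identity $(r_i-r_j)+(r_j-r_k)-(r_i-r_k)=0$. Where you genuinely diverge is the converse of (ii). The paper disposes of it by citing the Hodge decomposition: $\curl(\bA)=0$ forces $\rot(\bA)_{ij}=\frac{1}{n}\sum_k\curl(\bA)_{ijk}=0$, and since $\rot(\bA)=\bA-\grad(\dv(\bA))$ this exhibits $\bA$ as the gradient of its own divergence. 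You instead build the potential directly by integrating out of a reference vertex, $s_i:=\bA_{i1}$, and reading $\bA_{ij}=s_i-s_j$ off the triangle identity, then recentering. Both arguments are valid. Yours is more elementary and self-contained -- it never touches $\rot$ or the orthogonality of the decomposition, and it makes explicit that what is being used is the completeness of the comparison graph (every triangle is present) -- while the paper's version is a one-liner given that the Hodge theorem is already quoted and has the minor advantage of producing the canonical centered rating $\br=\dv(\bA)$ in one step rather than after a separate centering. Your closing bookkeeping, identifying $\curl(\logit\bP)_{ijk}$ with $\log\frac{p_{ij}}{p_{ji}}+\log\frac{p_{jk}}{p_{kj}}+\log\frac{p_{ki}}{p_{ik}}$ via $p_{ji}=1-p_{ij}$, is exactly what the stated scalar form requires and is correct.
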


Elo is, essentially, a uniform average in logit space. Elo's predictive failures are due to the cyclic component $\tilde{\bA}:= \rot(\logit \bP)$ that uniform averaging ignores.

\textbf{Multidimensional Elo (mElo$_{2k}$).}
Elo ratings bake-in the assumption that relative skill is transitive. However, there is no single dominant strategy in games like rock-paper-scissors or (arguably) StarCraft. Rating systems that can handle intransitive abilities are therefore necessary. An obvious approach is to learn a feature vector $\wt$ and a rating vector $\br_i$ per player, and predict $\hat{p}_{ij} = \sigma(\br_i^\intercal \wt - \br_j^\intercal\wt)$. Unfortunately, this reduces to the standard Elo rating since $\br_i^\intercal \wt$ is a scalar. 

Handling intransitive abilities requires learning an approximation to the cyclic component $\tilde{\bA}$. Combining the Schur and Hodge decompositions allows to construct low-rank approximations that extend Elo. Note, antisymmetric matrices have \emph{even} rank. Consider 
\begin{equation}
    \bA_{n\times n} = \grad(\br) + \tilde{\bA}\approx \grad(\br) + 
    \bC^\intercal\left(\begin{matrix}
        0 & 1 & \\
        -1 & 0 & \\
        & & \ddots
    \end{matrix}\right)
    \bC =: \grad(\br) + \bC_{n\times 2k}^\intercal\bOmega_{2k\times 2k}
     \bC_{2k\times n}
\end{equation}
where the rows of $\bC$ are orthogonal to each other, to $\br$, and to $\bO$. The larger $2k$, the better the approximation. Let \textbf{mElo$_{2k}$} assign each player Elo rating $r_i$ and $2k$-dimensional vector $\bc_i$. Vanilla Elo uses $2k=0$. The mElo$_{2k}$ win-loss prediction is
\begin{equation}
    \textbf{mElo$_{2k}$: }\, \hat{p}_{ij} = \sigma\Big(r_i - r_j +  \bc_i^\intercal\cdot \bOmega_{2k\times 2k} \cdot \bc_j\Big)
    \,\,\text{where}\,\,
    \bOmega_{2k\times 2k} = \sum_{i=1}^k(\be_{2i-1} \be_{2i}^\intercal - \be_{2i}\be_{2i-1}^\intercal).
\end{equation}
Online updates can be computed by gradient descent, see section~\ref{s:code}, with orthogonality enforced.

\subsection{Application: predicting win-loss probabilities in Go}
\label{s:alpha}

Elo ratings are widely used in Chess and Go. We compared the predictive capabilities of Elo and the simplest extension mElo$_2$ on eight Go algorithms taken from extended data table 9 in \cite{DSilverHMGSDSAPL16}: seven variants of AlphaGo, and Zen. The Frobenius norms and logistic losses are $\|\bP-\hat{\bP}\|_F = 0.85$ and $\ell_\text{log} = 1.41$ for Elo vs the empirical probabilities and $\|\bP-\hat{\bP}_2\|_F = 0.35$ and $\ell_\text{log} = 1.27$ for mElo$_2$.

To better understand the difference, we zoom in on three algorithms that were observed to interact non-transitively in \cite{Tuyls18}: $\alpha_v$ with value net, $\alpha_p$ with policy net, and Zen. Elo's win-loss predictions are poor (Table \textbf{Elo}: Elo incorrectly predicts both that $\alpha_p$ likely beats $\alpha_v$ and $\alpha_v$ likely beats Zen), whereas mElo$_2$ (Table \textbf{mElo$_2$}) correctly predicts likely winners in all cases (Table \textbf{empirical}), with more accurate probabilities:

\begin{center}
    \hspace{-1mm}
    \begin{tabular}{c|c c c|}
        \textbf{Elo} & $\alpha_v$ & $\alpha_p$ & Zen \\ 
        \hline
        $\alpha_v$ & - & 0.41 & 0.58 \\ 
        $\alpha_p$ & 0.59 & - & 0.67 \\ 
        Zen & 0.42 & 0.33 & - \\ 
        \hline
    \end{tabular}
    \quad
    \begin{tabular}{c|c c c|} 
        \textbf{empirical} & $\alpha_v$ & $\alpha_p$ & Zen \\ 
        \hline
        $\alpha_v$ & - & 0.7 & 0.4 \\ 
        $\alpha_p$ & 0.3 & - & 1.0 \\ 
        Zen & 0.6 & 0.0 & - \\ 
        \hline
    \end{tabular}
    \quad
    \begin{tabular}{c|c c c|} 
        \textbf{mElo$_2$} & $\alpha_v$ & $\alpha_p$ & Zen \\ 
        \hline
        $\alpha_v$ & - & 0.72 & 0.46 \\ 
        $\alpha_p$ & 0.28 & - & 0.98 \\ 
        Zen & 0.55 & 0.02 & - \\ 
        \hline
    \end{tabular}
\end{center}

\subsection{Agents vs tasks (AvT)}
\label{s:avt}

In AvT, results are represented as an $(m\times n)$ matrix $\bS$: rows are agents, columns are tasks, entries are scores (e.g. accuracy or total reward). Subtract the total mean, so the sum of all entries of $\bS$ is zero. We recast both agents and tasks as \emph{players} and construct an antisymmetric $(m+n)\times (m+n)$-matrix. 
Let $\bs = \frac{1}{m}\bS\cdot \bO_{m\times 1}$ and $\bd= -\frac{1}{n}\bS^\intercal\cdot\bO_{n\times 1}$ be the \textbf{average skill of each agent} and the \textbf{average difficulty of each task}. Define $\tilde{\bS} = \bS - (\bs\cdot\bO^\intercal - \bO\cdot \bd^\intercal)$. Let $\br$ be the concatenation of $\bs$ and $\bd$. We construct the  antisymmetric matrix
\begin{align}
    \label{eq:antisymmetrize}
    \bA_{(m+n)\times(m+n)} 
     = \grad(\br)
    + \underbrace{\left(\begin{matrix}
        \bZ_{m\times m} & \tilde{\bS}_{m\times n} \\
        -\tilde{\bS}^\intercal_{n\times m} & \bZ_{n\times n}
    \end{matrix}\right)}_{\tilde{\bA}}
     = \left(\begin{matrix}
        \grad(\bs) & \bS_{m\times n} \\
        -\bS^\intercal_{n\times m} & \grad(\bd)
    \end{matrix}\right).
\end{align}
The top-right block of $\bA$ is agent performance on tasks; the bottom-left is task difficulty for agents. The top-left block compares agents by their average skill on tasks; the bottom-right compares tasks by their average difficulty for agents. Average skill and difficulty explain the data if the score of agent $i$ on task $j$ is $\bS_{ij} = \bs_i - \bd_j$, the agent's skill minus the task's difficulty, for all $i,j$. Paralleling proposition~\ref{prop:when_elo_works}, averages explain the data, $\bS = \bs\bO^\intercal - \bO\bd^\intercal$, iff $\curl(\bA)=\bZ$.

The failure of averages to explain performance is encapsulated in $\tilde{\bS}$ and $\tilde{\bA}$. By proposition~\ref{prop:schur}, the SVD of $\tilde{\bS}$ and Schur decomposition of $\tilde{\bA}$ are equivalent. If the SVD is $\tilde{\bS}_{m\times n} = \bU_{m\times r}\bD_{r\times r}\bV^\intercal_{r\times n}$ then the rows of $\bU$ represent the latent abilities exhibited by agents and the rows of $\bV$ represent the latent problems posed by tasks.

\section{Invariant evaluation}
\label{s:meta}

Evaluation is often based on metrics like average performance or Elo ratings. Unfortunately, two (or two hundred) tasks or agents that look different may test/exhibit identical skills. Overrepresenting particular tasks or agents introduces biases into averages and Elo -- biases that can only be detected \emph{post hoc}. Humans must therefore decide which tasks or agents to retain, to prevent redundant agents or tasks from skewing results. At present, \emph{evaluation is not automatic and does not scale.}  To be scalable and automatic, an evaluation method should \emph{always benefit} from including additional agents and tasks. Moreover, it should \emph{adjust automatically and gracefully} to redundant data.

\begin{defn}
    An \textbf{evaluation method} maps data to a real-valued function on players (that is, agents or agents and tasks): 
    \begin{equation}
        \eval:\big\{\text{evaluation data}\big\} = \big\{\text{antisymmetric matrices}\big\} \rightarrow \Big[\big\{\text{players}\big\} \rightarrow \bR\Big]. 
    \end{equation}
\end{defn}

\textbf{Desired properties.}
    An evaluation method should be:
\begin{enumerate}[P1.]
    \item \emph{Invariant:} adding redundant copies of an agent or task to the data should make no difference. 
    \item \emph{Continuous:} the evaluation method should be robust to small changes in the data.
    \item \emph{Interpretable:} hard to formalize, but the procedure should agree with intuition in basic cases.
\end{enumerate}
Elo and uniform averaging over tasks are examples of evaluation methods that invariance excludes. 

\subsection{Nash averaging}
\label{s:mnmg}

This section presents an evaluation method satisfying properties $P1,P2,P3$. We discuss AvA here, see section~\ref{s:avt_meta} for AvT. Given antisymmetric logit matrix $\bA$, define a two-player meta-game with \emph{payoffs} $\mu_1(\bp,\bq) = \bp^\intercal \bA\bq$ and $\mu_2(\bp, \bq)=\bp^\intercal \bB\bq$ for the row and column meta-players, where $\bB=\bA^\intercal$. The game is symmetric because $\bB=\bA^\intercal$ and zero-sum because $\bB=-\bA$. 

The row and column meta-players pick `teams' of agents. Their payoff is the expected log-odds of their respective team winning under the joint distribution. If there is a dominant agent that has better than even odds of beating the rest, both players will pick it. In rock-paper-scissors, the only unbeatable-on-average team is the uniform distribution. In general, the value of the game is zero and the Nash equilibria are teams that are unbeatable in expectation. 

A problem with Nash equilibria (NE) is that they are not unique, which forces the user to make choices and undermines interpretability \cite{vonneumann:44, nash:50}. Fortunately, for zero-sum games there is a natural choice of Nash:

\begin{prop}[maxent NE]\label{prop:MaxNE}
    For antisymmetric $\bA$ there is a \emph{unique symmetric} Nash equilibrium $(\bp^*,\bp^*)$ solving $\max_{\bp\in\Delta_n} \min_{\bq\in\Delta_n} \bp^\intercal\bA\bq$ with greater entropy than any other Nash equilibrium.
\end{prop}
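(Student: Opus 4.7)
The plan is to exploit two facts: the meta-game is symmetric zero-sum with value zero, so the set of Nash equilibria has a very rigid product structure; and Shannon entropy is strictly concave on the simplex. Combining these facts yields existence and uniqueness at once.

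First I would pin down the value of the meta-game and the shape of the equilibrium set. Since $\bA$ is antisymmetric we have $\bp^\intercal \bA \bp = 0$ for every $\bp \in \Delta_n$, so the maximin value satisfies $v \geq 0$; applying the minimax theorem together with the symmetry $\bB = -\bA$ with the roles of row and column swapped forces $v \leq 0$ as well, so $v = 0$. The set of optimal row strategies
\begin{equation}
    X^* := \{\bp \in \Delta_n : \bp^\intercal \bA \be_j \geq 0 \text{ for all } j\in[n]\}
\end{equation}
is therefore a non-empty convex compact polytope, and because $\bA^\intercal = -\bA$ the set of optimal column strategies coincides with $X^*$. By the standard product structure of equilibria in zero-sum games, the set of Nash equilibria equals $X^* \times X^*$.

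Next I would invoke strict concavity. Viewing a mixed-strategy pair $(\bp, \bq)$ as its product distribution over pure pairs gives joint entropy $H(\bp, \bq) = H(\bp) + H(\bq)$. Shannon entropy is continuous and strictly concave on $\Delta_n$, so $(\bp, \bq) \mapsto H(\bp) + H(\bq)$ is continuous and strictly concave on the non-empty compact convex set $X^* \times X^*$ and attains its maximum at a unique point $(\bp^*, \bq^*)$. The objective and the feasible set are both invariant under swapping the two coordinates, so $(\bq^*, \bp^*)$ is also a maximizer; uniqueness then forces $\bp^* = \bq^*$. The resulting pair $(\bp^*, \bp^*)$ is the unique Nash equilibrium of strictly greatest entropy, and it is automatically symmetric.

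The main obstacle, such as it is, lies in pinning down what is meant by the entropy of a Nash equilibrium; once one commits to the natural joint (product-distribution) definition, strict concavity of $H$ on a convex set does all the work. A shorter variant avoids the subtlety entirely: restrict at the outset to symmetric equilibria $(\bp,\bp)$ with $\bp \in X^*$ and maximize $H(\bp)$ over the convex compact polytope $X^*$, where strict concavity again gives a unique maximizer.
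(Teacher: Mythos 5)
Your proof is correct and follows essentially the same route as the paper: establish that the equilibrium set of the zero-sum game is a product $X^*\times X^*$ of a compact convex polytope with itself (rectangularity plus the symmetry induced by $\bA^\intercal=-\bA$), then apply strict concavity of entropy to get a unique maximizer, which must be symmetric. Your version is somewhat more explicit than the paper's (in particular about the value being zero and about which entropy is being maximized), but there is no substantive difference in approach.
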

Maxent Nash is maximally indifferent between players with the same empirical performance. 
\begin{defn}
    The \textbf{maxent Nash evaluation method} for AvA is
    \begin{equation}
        \label{eq:eval}
        \eval_{m}: 
        \big\{\text{evaluation data}\big\} = \big\{\text{antisymmetric matrices}\big\}\xrightarrow{\text{maxent NE}} \left[\big\{\text{players}\big\}
        \xrightarrow{\text{Nash average}} \bR\right],
    \end{equation}
    where $\bp_\bA^*$ is the \textbf{maxent Nash equilibrium} and  $\bn_\bA:= \bA\cdot\bp^*_\bA$ is the \textbf{Nash average}.
\end{defn}
 Invariance to redundancy is best understood by looking at an example; for details see section~\ref{s:thm_main}.
\begin{eg}[invariance]
    Consider two logit matrices, where the second adds a redundant copy of agent $C$ to the first:
    \begin{center}
        \begin{tabular}{ c|c c c| } 
            $\bA$ & $A$ & $B$ & $C$ \\ 
            \hline
            $A$ & 0.0 & 4.6 & -4.6 \\ 
            $B$ & -4.6 & 0.0 & 4.6 \\ 
            $C$ & 4.6 & -4.6 & 0.0 \\ 
            \hline
        \end{tabular}
        \quad and\quad
        \begin{tabular}{ c|c c c c| } 
            $\bA'$ & $A$ & $B$ & $C_1$ & $C_2$ \\ 
            \hline
            $A$ & 0.0 & 4.6 & -4.6  & -4.6\\ 
            $B$ & -4.6 & 0.0 & 4.6 & 4.6 \\ 
            $C_1$ & 4.6 & -4.6 & 0.0 & 0.0\\ 
            $C_2$ & 4.6 & -4.6 & 0.0 & 0.0\\ 
            \hline
        \end{tabular}
    \end{center}
    The maxent Nash for $\bA$ is $\bp_\bA^*=(\frac{1}{3}, \frac{1}{3}, \frac{1}{3})$. It is easy to check that 
    $(\frac{1}{3}, \frac{1}{3}, \frac{\alpha}{3}, \frac{1-\alpha}{3})$ is Nash for $\bA'$ for any $\alpha\in[0,1]$ and thus the maxent Nash for $\bA'$ is
    $\bp^*_{\bA'}=(\frac{1}{3}, \frac{1}{3}, \frac{1}{6}, \frac{1}{6})$. Maxent Nash automatically detects the redundant agents $C_1,C_2$ and distributes $C$'s mass over them equally. 
    
    Uniform averaging is not invariant to adding redundant agents; concretely $\dv(\bA)=\bZ$ whereas $\dv(\bA')= (-1.15, 1.15,0,0)$, falsely suggesting agent $B$ is superior. In contrast, $\bn_\bA=\bZ_{3\times 1}$ and $\bn_{\bA'}=\bZ_{4\times1}$ (the zero-vectors have different sizes because there are different numbers of agents). Nash averaging correctly reports no agent is better than the rest in both cases.
\end{eg}
\vspace{-2mm}

\begin{figure}[t]
    \center
    \includegraphics[width=.99\textwidth]{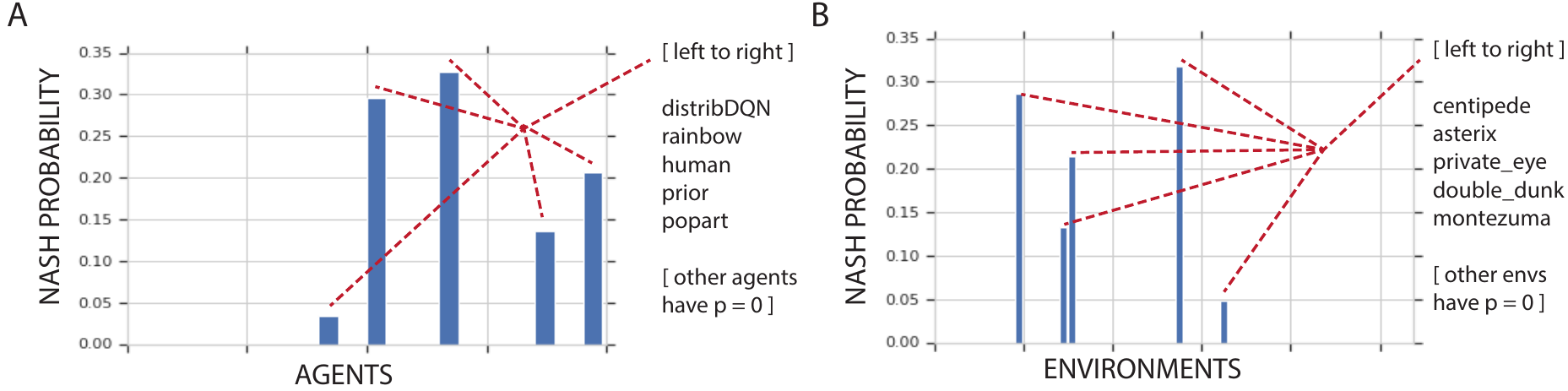}
    \vspace{-3mm}
    \caption{(A) The Nash $\bp_a^*$ assigned to agents; (B) the Nash $\bp_e^*$ assigned to environments. 
    }
    \label{f:nash_probs}
\end{figure}

\begin{thm}[main result for AvA\footnote{The main result for AvT is analogous, see section~\ref{s:avt_meta}. 
}]\label{thm:main}
    The maxent NE has the following properties:
    \vspace{-1mm}
    \begin{enumerate}[P1.]
        \item \textbf{Invariant:} Nash averaging, with respect to the maxent NE, is invariant to redundancies in $\bA$.
        \item \textbf{Continuous:} If $\bp^*$ is a Nash for $\hat{\bA}$ and $\epsilon = \|\bA-\hat{\bA}\|_\text{max}$ then $\bp^*$ is an $\epsilon$-Nash for $\bA$. 
        \item \textbf{Interpretable:}
        \textbf{(i)} The maxent NE on $\bA$ is the uniform distribution, $\bp^*=\frac{1}{n}\bO$, iff the meta-game is \emph{cyclic}, i.e. $\dv(\bA)=\bZ$.
        \textbf{(ii)} If the meta-game is \emph{transitive}, i.e. $\bA=\grad(\br)$, then the maxent NE is the uniform distribution on the player(s) with highest rating(s) -- there could be a tie.
    \end{enumerate}
\end{thm}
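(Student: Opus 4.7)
The plan is to prove the three properties sequentially, leveraging the fact that $\bA$ is antisymmetric (so the meta-game is symmetric zero-sum with value $0$) together with the already-stated existence and uniqueness of the maxent NE from Proposition~\ref{prop:MaxNE}. I expect the invariance claim (P1) to be the main obstacle, since it requires giving a formal definition of ``redundancy,'' while P2 and P3 reduce to short direct calculations.

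For P1, I will formalize a redundant copy of agent $i$ as an extra row/column such that $\bA'_{kj} = \bA_{ij}$ for the added index $k$ and every $j \neq i,k$; antisymmetry plus $\bA_{ii}=0$ then force $\bA'_{ik}=\bA'_{ki}=0$. The key observation is that the bilinear form $\bp^\intercal \bA' \bq$ depends only on the combined masses $p_i + p_k$ and $q_i + q_k$, so the linear merging map $\pi \colon \Delta_{n+1} \to \Delta_n$ that adds coordinates $i$ and $k$ satisfies $\bp^\intercal \bA' \bq = \pi(\bp)^\intercal \bA\, \pi(\bq)$. Consequently $\pi$ carries the set of symmetric NE of $\bA'$ onto the set of symmetric NE of $\bA$, and the fiber above any $\bp^*$ is the segment $\{(p^*_1,\ldots,\alpha p^*_i,\ldots,(1-\alpha)p^*_i) : \alpha \in [0,1]\}$. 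A one-line computation shows the entropy along this fiber is strictly concave in $\alpha$ with maximum at $\alpha=\tfrac12$, so the unique maxent NE of $\bA'$ is obtained from the unique maxent NE of $\bA$ by splitting the mass at $i$ equally across $i$ and its copy. The Nash average is then invariant: replacing $p^*_i$ by $(\tfrac12 p^*_i,\tfrac12 p^*_i)$ leaves every coordinate of $\bA' \bp^*$ equal to the corresponding coordinate of $\bA \bp^*$, and the new coordinate equals the old $i$-th entry.

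For P2, since $(\bp^*,\bp^*)$ is NE for $\hat{\bA}$ in a symmetric zero-sum game of value $0$, we have $\bp^{*\intercal}\hat{\bA}\bq \le 0$ for every $\bq \in \Delta_n$. Writing $\bp^{*\intercal}\bA\bq = \bp^{*\intercal}\hat{\bA}\bq + \bp^{*\intercal}(\bA-\hat{\bA})\bq$ and bounding the second term by $\|\bA-\hat{\bA}\|_{\max}$ (since $\bp^*$ and $\bq$ are probability vectors) yields $\bp^{*\intercal}\bA\bq \le \epsilon$, which is the definition of an $\epsilon$-NE for $\bA$.

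For P3(i), a fully-supported symmetric NE $\bp^*$ must satisfy $(\bA\bp^*)_i = 0$ for every $i$, i.e.\ $\bA\bp^* = \bZ$; specializing to $\bp^* = \tfrac1n \bO$ gives $\bA\bO = \bZ$, equivalently $\dv(\bA) = \bZ$. Conversely, if $\dv(\bA) = \bZ$, then every pure response to $\tfrac1n\bO$ yields payoff $0$, so uniform is a symmetric NE, and since it achieves the maximum possible entropy on $\Delta_n$ it is necessarily the maxent NE. For P3(ii), when $\bA = \grad(\br)$ the payoff factors as $\bp^\intercal \bA \bq = \bp^\intercal \br - \bq^\intercal \br$, so player $1$'s best response to any $\bq$ is any distribution supported on the argmax set $I^* = \{i : r_i = \max_j r_j\}$, and symmetrically for player $2$. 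Hence the NE set is $\{(\bp,\bq) : \mathrm{supp}(\bp),\mathrm{supp}(\bq) \subseteq I^*\}$, and maximizing entropy over the symmetric NE selects the uniform distribution on $I^*$.
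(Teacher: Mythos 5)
Your proposal is correct and follows essentially the same route as the paper: your merging map $\pi$ and its fibers are exactly the content of the paper's Lemma~\ref{lem:copy_nash} (Nash equilibria of $\bA'$ correspond to those of $\bA$ with the copied coordinate's mass split arbitrarily), your P2 is the same decomposition $\bp^{*\intercal}\bA\bq = \bp^{*\intercal}\hat{\bA}\bq + \bp^{*\intercal}(\bA-\hat{\bA})\bq$ used in section~\ref{s:thm_main} (modulo a sign slip: for a symmetric NE of the zero-sum game one has $\bp^{*\intercal}\hat{\bA}\bq \geq 0$, not $\leq 0$, which still yields the $\epsilon$-Nash bound), and your P3 arguments via the indifference principle and the decoupling $\bp^\intercal\grad(\br)\bq = \bp^\intercal\br - \br^\intercal\bq$ match the paper's. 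Your explicit fiber-wise entropy computation ($H(\bp') = H(\bp) + p_i H_2(\alpha)$, maximized at $\alpha = \tfrac12$) is in fact more detailed than the paper's treatment of that step.
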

See section~\ref{s:thm_main} for proof and formal definitions. For interpretability, if $\bA=\grad(\br)$ then the transitive rating is all that matters: Nash averaging measures performance against the best player(s). If $\dv(\bA)=\bZ$ then no player is better than any other. Mixed cases cannot be described in closed form. 

The continuity property is quite weak:  theorem~\ref{thm:main}.2 shows the \emph{payoff} is continuous: \emph{a team that is unbeatable for $\hat{\bA}$ is $\epsilon$-beatable for nearby $\bA$}. Unfortunately, Nash equilibria themselves can jump discontinuously when $\bA$ is modified slightly. Perturbed best response converges to a more stable approximation to Nash \cite{hofbauer:02, sandholm:10} that unfortunately is not invariant.

\begin{eg}[continuity]
    Consider the cyclic and transitive logit matrices
    \begin{equation}
        \bC = \left(\begin{matrix}
            0 & 1 & -1 \\ -1 & 0 & 1 \\ 1 & -1 & 0
        \end{matrix}\right)
        \quad\text{and}\quad
        \bT = \left(\begin{matrix}
            0 & 1 & 2 \\ -1 & 0 & 1 \\ -2 & -1 & 0
        \end{matrix}\right).
    \end{equation}
    The maxent Nash equilibria and Nash averages of $\bC+\epsilon\bT$ are
    \begin{equation}
        \bp^*_{\bC+\epsilon\bT} = \begin{cases}
            \left(\frac{1+\epsilon}{3}, \frac{1-2\epsilon}{3}, \frac{1+\epsilon}{3}\right) & \text{if }0\leq \epsilon\leq \frac{1}{2}\\
            (1,0,0) & \text{if }\frac{1}{2}<\epsilon
        \end{cases}
        \quad\text{and}\quad
        \bn_{\bC+\epsilon\bT} = \begin{cases}
        (0,0,0) & 0\leq\epsilon\leq \frac{1}{2}\\ 
        (0,-1-\epsilon, 1-2\epsilon) & \frac{1}{2}<\epsilon
        \end{cases}
    \end{equation}
    The maxent Nash is the uniform distribution over agents in the cyclic case ($\epsilon=0$), and is concentrated on the first player when it dominates the others ($\epsilon>\frac{1}{2}$). When $0<\epsilon<\frac{1}{2}$ the optimal team has most mass on the first and last players. Nash jumps discontinuously at $\epsilon=\frac{1}{2}$.
\end{eg}

\subsection{Application: re-evaluation of agents on the Arcade Learning Environment}

To illustrate the method, we re-evaluate the performance of agents on Atari \cite{BellemareNVB13}. Data is taken from results published in \cite{wang:16, vanhasselt:16, ostrovski:17, hessel:17}. Agents include \texttt{rainbow}, \texttt{duel}ing networks, \texttt{prior}itized replay, \texttt{pop-art}, \texttt{DQN}, count-based exploration and baselines like \texttt{human}, \texttt{random}-action and \texttt{no-action}. The 20 agents evaluated on 54 environments are represented by matrix $\bS_{20\times 54}$. It is necessary to standardize units across environments with quite different reward structures: for each column we subtract the $\min$ and divide by the $\max$ so scores lie in $[0,1]$.

We introduce a meta-game where row meta-player picks aims to pick the best distribution $\bp^*_a$ on agents and column meta-player aims to pick the hardest distribution $\bp^*_e$ on environments, see section~\ref{s:avt_meta} for details. We find a Nash equilibrium using an LP-solver; it should be possible to find the maxent Nash using the algorithm in \cite{ortiz:06, ortiz:07}. The Nash distributions are shown in figure~\ref{f:nash_probs}. The supports of the distributions are the `core agents' and the `core environments' that form unexploitable teams. See appendix for tables containing all skills and difficulties.
panel B.

Figure~\ref{f:performance}A shows the skill of agents under uniform $\frac{1}{n}\bS\cdot\bO$ and Nash $\bS\cdot \bp_e^*$ averaging over environments; panel B shows the difficulty of environments under uniform $-\frac{1}{m}\bS^\intercal\cdot\bO$ and Nash $-\bS^\intercal\cdot\bp_a^*$ averaging over agents. There is a tie for top between the agents with non-zero mass -- including human. This follows by the indifference principle for Nash equilibria: strategies with support have equal payoff. 

Our results suggest that the better-than-human performance observed on the Arcade Learning Environment is because ALE is skewed towards environments that (current) agents do well on, and contains fewer environments testing skills specific to humans. Solving the meta-game automatically finds a distribution on environments that evens out the playing field and, simultaneously, identifies the most important agents and environments.

\begin{figure}[t]
    \center
    \includegraphics[width=.99\textwidth]{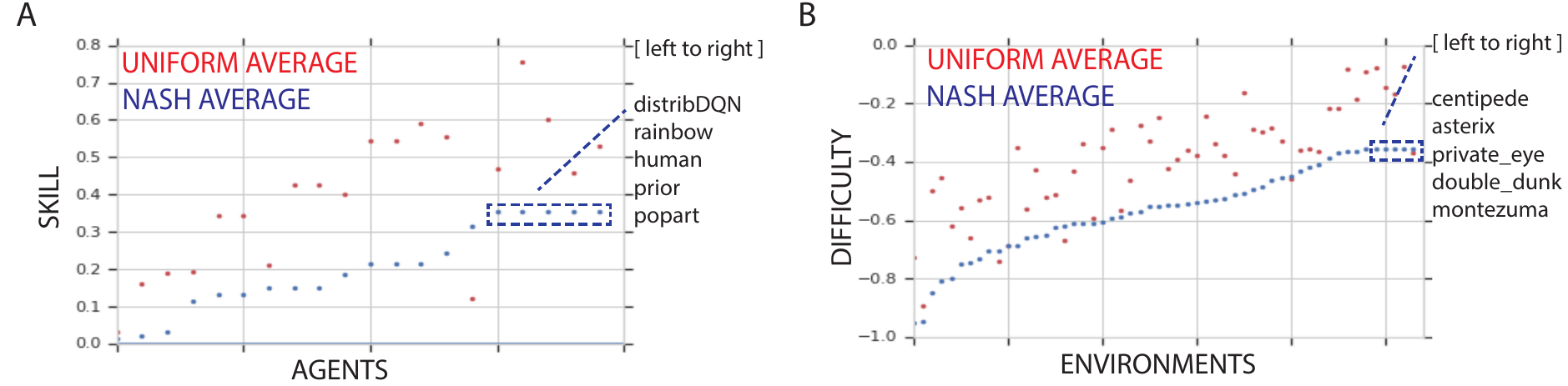}
    \vspace{-3mm}
    \caption{\textbf{Comparison of uniform and Nash averages.}
      (A) Skill of agents by uniform $\frac{1}{n}\bS\cdot\bO$ and Nash $\bS\cdot \bp_e^*$ averaging over environments. (B) Difficulty of environments under uniform $-\frac{1}{m}\bS^\intercal\cdot\bO$ and Nash $-\bS^\intercal\cdot\bp_a^*$ averaging over agents. Agents and environments are sorted by Nash-averages.
    }
    \label{f:performance}
\end{figure}

\section{Conclusion}

A powerful guiding principle when deciding what to measure is to find quantities that are \emph{invariant} to naturally occurring transformations. The determinant is computed over a basis -- however, the determinant is \emph{invariant to the choice of basis} since $\det(G^{-1}AG)=\det(A)$ for any invertible matrix $G$. Noether's theorem implies the dynamics of a physical system with symmetries obeys a conservation law.
The speed of light is fundamental because it is invariant to the choice of inertial reference frame.

One must have symmetries in mind to talk about invariance. \textbf{What are the naturally occurring symmetries in machine learning?} The question admits many answers depending on the context, see e.g. \cite{diaconis:88, lecun:98, kondor:03, kondor:08, zaheer:17, hartford:18, kondor:18}. In the context of evaluating agents, that are typically built from neural networks, it is unclear \emph{a priori} whether two seemingly different agents -- based on their parameters or hyperparameters -- are \emph{actually} different. Further, it is increasingly common that environments and tasks are parameterized -- or are learning agents in their own right, see self-play \cite{silver:17, silver:17a}, adversarial attacks \cite{szegedy:13, tramer:18, kurakin:18, uesato:18}, and automated curricula \cite{sukhbaatar:17}. The overwhelming source of symmetry when evaluating learning algorithms is therefore \textbf{redundancy}: different agents, networks, algorithms, environments and tasks that do basically the same job.

Nash evaluation computes a distribution on players (agents, or agents and tasks) that automatically adjusts to redundant data. It thus provides an invariant approach to measuring agent-agent and agent-environment interactions. In particular, Nash averaging encourages a maximally inclusive approach to evaluation: computational cost aside, the method should only benefit from including as many tasks and agents as possible. Easy tasks or poorly performing agents will not bias the results. As such Nash averaging is a significant step towards more \emph{objective} evaluation. 

Nash averaging is not always the right tool. Firstly, it is only as good as the data: \emph{garbage in, garbage out}. Nash decides which environments are important based on the agents provided to it, and conversely. As a result, the method is blind to differences between environments that do not make a difference to agents and vice versa. Nash-based evaluation is likely to be most effective when applied to a diverse array of agents and environments. Secondly, for good or ill, Nash averaging removes control from the user. One may have good reason to disagree with the distribution chosen by Nash. Finally, Nash is a harsh master. It takes an adversarial perspective and may not be the best approach to, say, constructing automated curricula -- although boosting is a related approach that works well \cite{freund:96, schapire:12}. It is an open question whether alternate invariant evaluations can be constructed, game-theoretically or otherwise.

\textbf{Acknowledgements.}
We thank Georg Ostrovski, Pedro Ortega, José Hernández-Orallo and Hado van Hasselt for useful feedback.
{
\small

}
\setcounter{section}{0}
\renewcommand{\thesection}{\Alph{section}}
\vspace{5mm}
\noindent
{\textsf{\textbf{\Large{APPENDIX}}}}

\section{Invariance: Further motivation}
\label{s:inv_mot}

Consider agents A, B, C evaluated on a benchmark suite comprising three tasks:
\begin{center}
    \begin{tabular}{ c|c c c| c c|} 
        & task 1 & task 2 & task 3 & \textbf{average} & \textbf{rank} \\ 
        \hline
        agent A & 89 & 93 & 76 & 86 & $1^\text{st}$\\ 
        agent B & 85 & 85 & 85 & 85 & $2^\text{nd}$\\ 
        agent C & 79 & 74 & 99 & 84 & $3^\text{rd}$\\ 
        \hline
    \end{tabular}
\end{center}
On average, agent A performs best and agent C performs worst. Consider a second benchmark suite containing an additional fourth task. On the second benchmark suite, agent A performs worst and agent C performs best on average. However, a closer look at the second suite reveals that the additional task is a minor variant of one of the original three tasks. 
\begin{center}
    \begin{tabular}{ c|c c c c|c c|} 
        & task 1 & task 2 & task 3a & task 3b & \textbf{average} & \textbf{rank} \\ 
        \hline
        agent A & 89 & 93 & 76 & 77 & 84 & $3^\text{rd}$\\ 
        agent B & 85 & 85 & 85 & 84 & 85 & $2^\text{nd}$\\ 
        agent C & 79 & 74 & 99 & 98 & 88 & $1^\text{st}$\\ 
        \hline
    \end{tabular}
\end{center}
Measuring performance by uniformly averaging over tasks in a benchmark suite is sensitive to the set of tasks that are included in the suite. Including redundant tasks, whether consciously or not, can easily skew average performance in favor of or against particular agents. 

The problem becomes more serious when agent performance is measured against other agents, as in games such as Go, Chess or StarCraft. It is easy to manipulate the measured performance of agents by tweaking the composition of the population used to evaluate them. Consider the following example:
\begin{center}
    \begin{tabular}{ c|c c c| c|} 
        & agent A & agent B & agent C & \textbf{Elo}  \\ 
        \hline
        agent A & 0.5 & 0.9 & 0.1 & 0\\ 
        agent B & 0.1 & 0.5 & 0.9 & 0\\ 
        agent C & 0.9 & 0.1 & 0.5 & 0\\ 
        \hline
    \end{tabular}
\end{center}
The three agents exhibit rock-paper-scissors dynamics; their Elo ratings (normalized to sum to zero) are all zero. However, adding a second copy of agent C decreases the Elo rating of agent A and increases the Elo rating of agent B: 
\begin{center}
    \begin{tabular}{ c|c c c c| c|} 
        & agent A & agent B & agent C$_1$ & agent C$_2$ & \textbf{Elo}  \\ 
        \hline
        agent A     & 0.5 & 0.9 & 0.1 & 0.1 & -63\\ 
        agent B     & 0.1 & 0.5 & 0.9 & 0.9 &  63\\ 
        agent C$_1$ & 0.9 & 0.1 & 0.5 & 0.5 &   0\\ 
        agent C$_2$ & 0.9 & 0.1 & 0.5 & 0.5 &   0\\ 
        \hline
    \end{tabular}
\end{center}
That is, the Elo ratings of agents A and B are easily manipulated by changing the structure of the population. 

The examples above suggest it is important to find evaluation metrics that are \emph{invariant} to redundant changes in the population of agents or suite of tasks.

\paragraph{Related work.}
A different notion of measurement invariance has been proposed in the psychometric and consumer research literatures \cite{vandenberg:00}. There, measurement invariance refers to the statistical property that a measurement measures the same construct across a predefined set of groups. For example, whether or not a question in an IQ test is measurement invariant has to do with whether or not the question is interpreted in the same way by individuals with different cultural backgrounds.

\section{Proofs of propositions}
\label{s:proofs_of_props}

\textbf{Proof of proposition~\ref{prop:elo_fp}.}

\begin{prop*}
    Batch Elo updates are stable if the matrices of empirical probabilities and predicted probabilities have the same row-sums (or, equivalently the same column-sums):
    \begin{equation}
        \nabla_i \sum_j\ell_\text{Elo}(\bar{p}_{ij},\hat{p}_{ij}) = 0
        \;\forall i \quad\text{iff}\quad
        \sum_j \bar{p}_{ij} = \sum_j \hat{p}_{ij}\;\forall i.
    \end{equation}
\end{prop*}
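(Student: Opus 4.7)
The plan is to differentiate the batch Elo loss explicitly and exploit the standard identity by which cross-entropy with a sigmoid link yields the clean residual $\hat p - p$, from which the fixed-point condition falls out immediately.

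First I would compute, via the chain rule, $\partial \ell_\text{Elo}/\partial \hat p_{ij} = (\hat p_{ij} - \bar p_{ij})/[\hat p_{ij}(1-\hat p_{ij})]$. Since $\hat p_{ij} = \sigma(r_i - r_j)$ and $\sigma'(x) = \sigma(x)(1-\sigma(x))$, we have $\partial \hat p_{ij}/\partial r_i = \hat p_{ij}(1-\hat p_{ij})$, and this factor cancels the denominator. Hence $\partial \ell_\text{Elo}(\bar p_{ij},\hat p_{ij})/\partial r_i = \hat p_{ij} - \bar p_{ij}$, and summing over $j$ gives
\begin{equation}
  \nabla_{r_i}\sum_j \ell_\text{Elo}(\bar p_{ij},\hat p_{ij}) = \sum_j (\hat p_{ij} - \bar p_{ij}),
\end{equation}
which vanishes simultaneously for every $i$ if and only if the row sums of the empirical matrix $\bar\bP$ and the predicted matrix $\hat\bP$ agree. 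This yields both directions of the iff in one go.

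For the parenthetical equivalence with matching column sums I would appeal to the antisymmetry already exploited in the paper: both $\bar\bP$ and $\hat\bP$ are win-loss probability matrices, so $p_{ij} + p_{ji} = 1$, whence $\sum_j p_{ji} = n - \sum_j p_{ij}$. Matching row sums and matching column sums are therefore equivalent.

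There is no real mathematical obstacle; the only point requiring care is bookkeeping about which copies of $r_i$ are being differentiated. If one reads the batch loss as the full double sum $\sum_{i,j}\ell_\text{Elo}(\bar p_{ij},\hat p_{ij})$ rather than just the $i$th row, then $r_i$ enters through both $\hat p_{ij} = \sigma(r_i - r_j)$ and $\hat p_{ji} = 1 - \hat p_{ij}$, and a short check shows the second contribution equals $\bar p_{ji} - \hat p_{ji} = \hat p_{ij} - \bar p_{ij}$, doubling the gradient but leaving the stationarity condition unchanged. This sign-tracking through the antisymmetry $p_{ij}+p_{ji}=1$ is the only step worth double-checking.
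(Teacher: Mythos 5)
Your proof is correct and follows essentially the same route as the paper's: a direct chain-rule computation showing the cross-entropy-with-sigmoid gradient collapses to the residual $\sum_j(\hat p_{ij}-\bar p_{ij})$, from which the row-sum condition is immediate (the paper simply asserts this identity with the opposite, inconsequential, sign convention and omits the derivation). Your added remarks on the column-sum equivalence via $p_{ij}+p_{ji}=1$ and on the doubling of the gradient under the full double-sum reading are both correct and only strengthen the argument.
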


\begin{proof}
    Player $i$'s weights are updated in one batch after observing win-loss probabilities $\bar{p}_{ij}$ for each player $j=1, \ldots, n$. Observe that
    \begin{equation}
        \nabla_i\left[ \sum_{j=1}^n\ell_\text{Elo}(\bar{p}_{ij},\hat{p}_{ij})\right]
        = \sum_{j=1}^n (\bar{p}_{ij} - \hat{p}_{ij}).
    \end{equation}
    The result follows.
\end{proof}

\textbf{Proof of proposition~\ref{prop:schur}.}
\begin{prop*}
  Given matrix $\bS_{m\times n}$ with rank $r$ and singular value decomposition $\bU \bD\bV^\intercal$. Construct antisymmetric matrix 
    \begin{equation}
        \bA_{(m+n)\times(m+n)} = 
        \left(\begin{matrix}
            \bZ_{m\times m} & \bS_{m\times n} \\
            -\bS^\intercal_{n\times m} & \bZ_{n\times n}
        \end{matrix}\right).
    \end{equation}
  Then the thin Schur decomposition of $\bA$ is $\bQ\bLambda\bQ^\intercal$ where the eigenpairs in the $\bLambda_{2r\times 2r}$ are $\pm$ the singular values in $\bD$ and
  \begin{equation}
      \bQ_{(m+n)\times r} = \left(\begin{matrix}
          -\bu_1 & \bZ_{m\times1} & \cdots & -\bu_r & \bZ_{m\times1} & \\
          \bZ_{n\times1} & \bv_1 & \cdots & \bZ_{n\times1} & \bv_r
      \end{matrix}\right)
  \end{equation}
\end{prop*}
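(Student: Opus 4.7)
The plan is to verify the factorization directly: first establish $\bA\bQ = \bQ\bLambda$, then check the columns of $\bQ$ form an orthonormal set, and finally pass from $\bA\bQ = \bQ\bLambda$ to $\bA = \bQ\bLambda\bQ^\intercal$ via a range argument. Beyond the defining SVD identities $\bS\bv_i = \sigma_i\bu_i$ and $\bS^\intercal\bu_i = \sigma_i\bv_i$, the only tool needed is elementary linear algebra, so the work is mostly careful bookkeeping between the block form of $\bA$ and the paired columns of $\bQ$.

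For the first step, I would label the columns of $\bQ$ as $\bq_{2i-1} = (-\bu_i^\intercal, \bZ^\intercal)^\intercal$ and $\bq_{2i} = (\bZ^\intercal, \bv_i^\intercal)^\intercal$ for $i=1,\dots,r$, and compute $\bA\bq_{2i-1}$ and $\bA\bq_{2i}$ block-wise. Each product vanishes in one block and reduces in the other to $\bS^\intercal\bu_i$ or $\bS\bv_i$, giving $\bA\bq_{2i-1} = \sigma_i\bq_{2i}$ and $\bA\bq_{2i} = -\sigma_i\bq_{2i-1}$. Collected over $i$, these two identities are precisely $\bA\bQ = \bQ\bLambda$ with the claimed $(2\times 2)$ block structure on the diagonal of $\bLambda$. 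Matching signs shows that the off-diagonal entry $\lambda_j$ appearing in the $j$-th block equals $-\sigma_j$, so the nonzero entries of $\bLambda$ are $\pm\sigma_j$ as advertised, and its eigenvalues $\pm i\sigma_j$ agree with the known purely imaginary spectrum of an antisymmetric matrix.

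Orthonormality of $\bQ$ splits into three easy cases: two columns of the form $(-\bu_i^\intercal,\bZ^\intercal)^\intercal$ for different $i$ are orthogonal because $\bu_i^\intercal\bu_j = \delta_{ij}$; two columns of the form $(\bZ^\intercal,\bv_j^\intercal)^\intercal$ are orthogonal by the analogous identity for $\{\bv_j\}$; and a column of the first type is orthogonal to a column of the second type because their supports in $\bR^{m+n}$ are disjoint. Hence $\bQ^\intercal\bQ = \bI_{2r}$.

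The only mildly subtle point is getting from $\bA\bQ = \bQ\bLambda$ to the desired $\bA = \bQ\bLambda\bQ^\intercal$. Right-multiplying by $\bQ^\intercal$ yields $\bA\bQ\bQ^\intercal = \bQ\bLambda\bQ^\intercal$, so it suffices to show $\bA = \bA\bQ\bQ^\intercal$, i.e.\ that $\bQ\bQ^\intercal$---the orthogonal projection onto $\operatorname{range}(\bQ)$---restricts to the identity on $\operatorname{range}(\bA^\intercal)$. The block form of $\bA$ shows immediately that $\operatorname{range}(\bA) \subseteq \operatorname{span}\{\bu_i\} \oplus \operatorname{span}\{\bv_i\} = \operatorname{range}(\bQ)$, and antisymmetry of $\bA$ gives $\operatorname{range}(\bA^\intercal) = \operatorname{range}(-\bA) = \operatorname{range}(\bA)$, which closes the argument. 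This range-plus-antisymmetry step is the one place where I expect the only real care to be required; everything else is a direct unpacking of the SVD.
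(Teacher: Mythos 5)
Your proposal is correct and is essentially the paper's own argument carried out in full: the paper's proof is literally ``direct computation; multiply out the matrices,'' and your block-wise verification of $\bA\bq_{2i-1}=\sigma_i\bq_{2i}$, $\bA\bq_{2i}=-\sigma_i\bq_{2i-1}$ together with orthonormality of the columns is exactly that computation. The one detail you supply that the paper glosses over is the passage from $\bA\bQ=\bQ\bLambda$ to $\bA=\bQ\bLambda\bQ^\intercal$ for a non-square $\bQ$, via $\operatorname{range}(\bA^\intercal)=\operatorname{range}(\bA)\subseteq\operatorname{range}(\bQ)$, which is a worthwhile addition.
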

\begin{proof}
    Direct computation; multiply out the matrices.
\end{proof}

\textbf{Proof of proposition~\ref{prop:when_elo_works}.}

\begin{prop*}
    \textbf{(i)} If probabilities $\bP$ are generated by Elo ratings $\br$ then the divergence of its logit is $\br$. That is,
    \begin{equation}
        \text{if }p_{ij}=\sigma(r_i-r_j)\,\,\forall i,j\,\text{ then }\dv(\logit\bP) = \Big(\frac{1}{n}\sum_{j=1}^n (r_i-r_j)\Big)_{i=1}^n
        =\br.
    \end{equation}
    \textbf{(ii)} There is an Elo rating that generates probabilities $\bP$ iff $\curl(\logit \bP)=0$. Alternatively, iff $\log \frac{p_{ij}}{p_{ji}} + \log \frac{p_{jk}}{p_{kj}} + \log \frac{p_{ki}}{p_{ik}} = 0$ for all $i,j,k$.
\end{prop*}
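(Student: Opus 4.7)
The plan is to recognize that the logit transformation $\bA := \logit \bP$ sends win-loss probabilities into the space of antisymmetric matrices (since $p_{ji} = 1 - p_{ij}$ forces $\logit p_{ji} = -\logit p_{ij}$), at which point the proposition becomes an almost direct application of the Hodge decomposition and the identities $\dv\circ\grad = \mathrm{id}$ and $\rot\circ\grad = 0$ already stated in the excerpt.

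For part (i), I would observe that $p_{ij} = \sigma(r_i - r_j)$ is equivalent to $\bA_{ij} = r_i - r_j$, i.e.\ $\logit\bP = \br\bO^\intercal - \bO\br^\intercal = \grad(\br)$. Since only differences $r_i - r_j$ appear in $\bP$, we may translate $\br$ so that $\br^\intercal\bO = 0$ without changing $\bP$. Then part (i) of the Hodge theorem gives $\dv(\logit\bP) = \dv\circ\grad(\br) = \br$, which is exactly the claimed identity (the displayed formula $\frac{1}{n}\sum_j(r_i - r_j)$ is just the expanded definition of $\dv$ applied to $\grad(\br)$, and collapses to $r_i$ under the normalization).

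For part (ii), the forward direction follows from part (i): if some $\br$ generates $\bP$, then $\logit\bP = \grad(\br)$, and a one-line telescoping calculation $(r_i - r_j) + (r_j - r_k) - (r_i - r_k) = 0$ shows $\curl\circ\grad(\br) = 0$. For the converse, I would argue that $\curl(\logit\bP) = 0$ implies $\rot(\logit\bP) = 0$ immediately from the definition $\rot(\bA)_{ij} = \frac{1}{n}\sum_k \curl(\bA)_{ijk}$. Applying the Hodge decomposition to the antisymmetric matrix $\logit\bP$, we obtain $\logit\bP = \grad(\dv(\logit\bP)) + \rot(\logit\bP) = \grad(\dv(\logit\bP))$, so the vector $\br := \dv(\logit\bP)$ is an Elo rating that generates $\bP$.

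Finally, the ``equivalently'' clause is a matter of rewriting: $\curl(\logit\bP)_{ijk} = \logit p_{ij} + \logit p_{jk} - \logit p_{ik}$, and $\logit p_{ij} = \log\frac{p_{ij}}{1 - p_{ij}} = \log\frac{p_{ij}}{p_{ji}}$, while $-\logit p_{ik} = \log\frac{p_{ki}}{p_{ik}}$; collecting these gives the stated cyclic log-ratio condition. There is no real obstacle here — everything reduces to the Hodge machinery already in place; the only care needed is the implicit normalization $\br^\intercal\bO = 0$ in part (i), and the observation that vanishing of the three-tensor $\curl$ is strictly stronger than vanishing of its average $\rot$, but suffices (and is in fact equivalent, via the decomposition) for the image-of-gradient characterization.
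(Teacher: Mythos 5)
Your proof is correct and follows essentially the same route as the paper's (which simply says to apply the definitions with the normalization $\br^\intercal\bO=0$ for part (i) and to invoke the Hodge decomposition for part (ii)); you have merely spelled out the details, including the correct observation that $\curl(\bA)=0$ forces $\rot(\bA)=0$ and hence $\logit\bP=\grad(\dv(\logit\bP))$.
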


\begin{proof}
    For the first claim, apply definitions and recall $\br^\intercal\bO=0$. For the second, apply the Hodge decomposition from section~\ref{s:anti}.
\end{proof}

\textbf{Proof of proposition~\ref{prop:MaxNE}.}

\begin{prop*}[maxent NE]
    The game $\max_{\bp\in\Delta_n} \min_{\bq\in\Delta_n} \bp^\intercal\bA\bq$, where $\bA$ is antisymmetric, has a \emph{unique symmetric} Nash equilibrium $(\bp^*,\bp^*)$, with greater entropy than any other Nash equilibrium.
\end{prop*}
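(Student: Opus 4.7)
The plan is to first use the symmetric zero-sum structure of the meta-game to reduce the Nash equilibrium set to a simple product form, and then appeal to strict concavity of entropy to isolate a unique maximizer.

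First I would observe that the meta-game is symmetric and zero-sum: antisymmetry of $\bA$ gives $\bp^\intercal \bA \bq = -\bq^\intercal \bA \bp$, and $\bp^\intercal \bA \bp = 0$ for every $\bp \in \Delta_n$, so by the minimax theorem the value equals $0$. A standard consequence of the minimax theorem for finite zero-sum games is that the Nash equilibrium set has the product form $P^* \times Q^*$, where $P^*$ is the set of maximin strategies for the row player and $Q^*$ the set of minimax strategies for the column player. Antisymmetry interchanges the two optimization problems, so $P^* = Q^*$; writing $P^*$ for the common set, the symmetric equilibria are exactly the pairs $(\bp, \bp)$ with $\bp \in P^*$.

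Next I would observe that $P^*$ is a nonempty, convex, compact polytope: it is cut out of the simplex by the linear inequalities $\bA^\intercal \bp \geq \bZ$ (which, together with $\bp \in \Delta_n$, encode that $\bp$ secures value $0$). On this set, the Shannon entropy $H(\bp) = -\sum_i p_i \log p_i$ is continuous and strictly concave, so it attains its maximum on $P^*$ at a unique point $\bp^* \in P^*$.

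Finally, for any Nash equilibrium $(\bp, \bq) \in P^* \times P^*$ different from $(\bp^*, \bp^*)$, at least one of $\bp, \bq$ fails to equal $\bp^*$, so by strict concavity the joint entropy $H(\bp) + H(\bq)$ of the independent product distribution is strictly less than $2 H(\bp^*)$. This singles out $(\bp^*, \bp^*)$ as the unique entropy-maximizing Nash equilibrium, and it is by construction symmetric. I expect the main obstacle to be invoking cleanly the product-set characterization of zero-sum equilibria together with the identification $P^* = Q^*$ coming from antisymmetry; once these structural facts are in place, strict concavity of $H$ on the simplex closes the argument with essentially no further work.
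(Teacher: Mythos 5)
Your proposal is correct and takes essentially the same route as the paper's proof: rectangularity (exchangeability) of equilibria in finite zero-sum games plus the symmetry induced by $\bA^\intercal=-\bA$ reduce everything to a compact convex polytope of optimal strategies, on which strict concavity of the entropy yields a unique maximizer. If anything, your write-up is slightly more explicit than the paper's about which set the entropy is maximized over ($P^*$, cut out by $\bA^\intercal\bp\succeq\bZ$) and why the symmetric maximizer dominates every asymmetric equilibrium in joint entropy.
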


\begin{proof}
    The Nash equilibria in a two-player zero-sum are rectangular: if $(\bp,\bq)$ and $(\bu,\bv)$ are Nash equilibria then so are $(\bp, \bv)$ and $(\bu, \bq)$. Further, they form a convex polytope. Since $\bA^\intercal=-\bA$, the set of Nash equilibria is also symmetric: if $(\bp, \bq)$ is a Nash equilibrium then so is $(\bq, \bp)$. The entropy $H(\bp) := -\sum_{i=1}^n p_i \log p_i$ is strictly concave and therefore achieves a unique maximum on the compact, convex, symmetric set of Nash equilibria.
\end{proof}

\section{Proof of theorem~\ref{thm:main}}
\label{s:thm_main}

\begin{thm*}[main result for AvA]
    The maxent NE has the following properties:
    \vspace{-1mm}
    \begin{enumerate}[P1.]
        \item \textbf{Invariant:} Nash averaging, with respect to the maxent NE, is invariant to redundancies in $\bA$.
        \item \textbf{Continuous:} If $\bp^*$ is a Nash for $\hat{\bA}$ and $\epsilon = \|\bA-\hat{\bA}\|_\text{max}$ then $\bp^*$ is an $\epsilon$-Nash for $\bA$. 
        \item \textbf{Interpretable:}
        \textbf{(i)} The maxent NE on $\bA$ is the uniform distribution, $\bp^*=\frac{1}{n}\bO$, iff the meta-game is \emph{cyclic}, i.e. $\dv(\bA)=\bZ$.
        \textbf{(ii)} If the meta-game is \emph{transitive}, i.e. $\bA=\grad(\br)$, then the maxent NE is the uniform distribution on the player(s) with highest rating(s) -- there could be a tie.
    \end{enumerate}
\end{thm*}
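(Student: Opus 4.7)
The plan is to prove the three properties in increasing order of subtlety. P3 and P2 reduce to short direct arguments, while P1 carries the main content through a merge/lift correspondence between Nash equilibria of $\bA$ and $\bA'$.

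For P3(i), if $\dv(\bA)=\bZ$ then $\bA\bO=\bZ$, so the uniform $\tfrac{1}{n}\bO$ is a Nash with value $0$; since the uniform distribution globally maximizes $H$ on the simplex it is the maxent Nash. Conversely, if uniform is Nash then the row-player condition forces $(\bA\bO)_k\leq 0$ for every $k$, while antisymmetry gives $\bO^\intercal\bA\bO=0$, so all coordinates vanish and $\dv(\bA)=\bZ$. For P3(ii), substituting $\bA=\grad(\br)=\br\bO^\intercal-\bO\br^\intercal$ makes the payoff separable, $\bp^\intercal\bA\bq=\bp^\intercal\br-\br^\intercal\bq$, so the row player's best response concentrates on $\arg\max_i r_i$; every Nash has support inside this set, and maximizing entropy over it yields the uniform distribution on the argmax. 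For P2, the bound $|\bp^\intercal(\bA-\hat{\bA})\bq|\leq \|\bp\|_1\,\|\bA-\hat{\bA}\|_\text{max}\,\|\bq\|_1 = \epsilon$ applied to the symmetric Nash $\bp^*$ of $\hat{\bA}$ (value $0$) gives $(\bp^*)^\intercal\bA\bq\geq (\bp^*)^\intercal\hat{\bA}\bq - \epsilon\geq -\epsilon$ for every $\bq$, which is precisely the $\epsilon$-Nash condition for $\bA$.

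For P1, I would formalize a redundancy as the insertion of an index $i'$ whose row and column in $\bA'$ coincide with those of an existing index $i$, and then proceed in three steps. Step 1: define the merge $\pi(\bp')$ by summing coordinates $i$ and $i'$; duplicated rows and columns yield $\bp'^\intercal\bA'\bq' = \pi(\bp')^\intercal\bA\,\pi(\bq')$, so $\pi$ sends Nash of $\bA'$ to Nash of $\bA$ and every Nash of $\bA$ lifts to a one-parameter family parameterized by the $(i,i')$-split. Step 2: among lifts of a fixed Nash $\bp$, the entropy decomposes as $H(\bp)+p_i\cdot(-\alpha\log\alpha-(1-\alpha)\log(1-\alpha))$, strictly concave in the split ratio $\alpha$ and maximized at $\alpha=\tfrac{1}{2}$, so the maxent lift splits mass equally, $p'^*_i=p'^*_{i'}=p_i/2$. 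Step 3: a direct expansion using column-duplication gives $(\bA'\bp')_j = (\bA\cdot\pi(\bp'))_j$ for each non-duplicated $j$, and row-duplication gives $(\bA'\bp')_i = (\bA'\bp')_{i'} = (\bA\cdot\pi(\bp'))_i$, so duplicate agents receive identical Nash-average scores and all non-duplicates are evaluated against a Nash of $\bA$.

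The main obstacle is identifying $\pi(\bp'^*_{\bA'})$ with $\bp^*_\bA$ so that $\bn_{\bA'}$ agrees with $\bn_\bA$ coordinate-wise on non-redundant players. When $\bA$ has a unique Nash -- as in the motivating rock-paper-scissors example -- the Nash polytope of $\bA'$ is exactly the one-parameter lift family and Step 2 forces $\pi(\bp'^*_{\bA'})=\bp^*_\bA$ outright. In general the Nash polytope can be higher-dimensional, and the maxent lift may project to a different element of it, so the invariance must be articulated carefully, e.g.\ at the level of the support of the maxent NE (where all Nash averages vanish in zero-sum antisymmetric games) combined with the equal-split property of Step 2. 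Pinning down this formalization, so that it reproduces the clean invariance displayed by the example in the text, is where the bulk of the work lies.
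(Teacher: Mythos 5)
Your proposal follows essentially the same route as the paper: the value-zero characterization of symmetric Nash equilibria and the merge/lift correspondence between the Nash polytopes of $\bA$ and $\bA'$ for P1 (the paper's copy-and-split lemma), the $\|\cdot\|_\text{max}$ perturbation bound exploiting $(\bp^*)^\intercal\bA\bp^*=0$ for P2, and the same direct decoupling arguments for P3. If anything you are more careful than the paper, which asserts the equal-split conclusion without writing out the entropy decomposition across lifts and does not address the subtlety you flag -- that when the Nash polytope of $\bA$ is higher-dimensional, the maxent lift could a priori project onto a different Nash of $\bA$ than the maxent one.
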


\subsection{Proof of theorem~\ref{thm:main}.1}

First, we more precisely formalize invariance to redundancy.  

\begin{defn}[invariance to copying the last row and column]\label{def:redundancy}
    Given antisymmetric matrix $\bA_{n\times n}$, denote the right-most column by $\ba_n$. Assume the right-most column (and bottom row) of $\bA$ differs from all other columns. Construct antisymmetric matrix
    \begin{equation}
        \label{eq:actual_red}
        \bA'_{(n+1)\times (n+1)} = \left(\begin{matrix}
            \bA & \ba_n \\
            -\ba_n^\intercal & 0
            \end{matrix}\right)
    \end{equation}
    by adding an additional copy of the right-most column (and bottom row) to $\bA$.
    A family of functions
    \begin{equation}
        \Big\{\bp_k:\{\text{antisymmetric $k\times k$ matrices}\}\rightarrow \bR^k\Big\}_{k=1}^\infty
    \end{equation}
    is invariant to adding a row and column according to \eqref{eq:actual_red} if
    \begin{equation}
        \label{eq:actual_inv}
        \bp_{n+1}(\bA')^\intercal = \left(\bp_n(\bA)[1], \ldots, \bp_n(\bA)[n-1], \frac{\bp_n(\bA)[n]}{2}, \frac{\bp_n(\bA)[n]}{2}\right).
    \end{equation}
\end{defn}

If the copied row is not unique and receives positive mass under maxent Nash, then maxent Nash will already be spreading mass across the copies. In that case, adding \emph{yet another} copy will result in the maxent Nash on the larger mass spreading mass evenly across all copies. 

\begin{lem}\label{lem:copy_nash}
    Suppose $\bA_{n\times n}$ is antisymmetric with Nash equilibrium $\bp$. Construct $\bA'$ from $\bA$ by adding a redundant copy of the right-most column and  bottom row according \eqref{eq:actual_red}. Then 
    \begin{equation}
        \bp_\alpha' = \big(p_1, \ldots, p_{n-1},\alpha\cdot p_n, (1-\alpha)\cdot p_n\big)
    \end{equation}
    is a Nash equilibrium for $\bA'$ for all $\alpha\in[0,1]$. Conversely, if $\bp'$ is a Nash equilibrium for $\bA'$ then
    \begin{equation}
        (p'_1, \ldots, p'_{n-1}, p'_n + p'_{n+1})
    \end{equation}
    is a Nash equilibrium for $\bA$.
\end{lem}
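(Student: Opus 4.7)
Because both $\bA$ and $\bA'$ are antisymmetric, the meta-game is symmetric zero-sum; the argument used in Proposition~\ref{prop:MaxNE} shows that a symmetric Nash strategy is precisely any $\bp$ in the simplex satisfying the coordinate-wise first-order inequality $\bA\bp\le\bZ$, with equality on $\mathrm{supp}(\bp)$. My plan is to verify this inequality in both directions by a direct block computation, using the two algebraic facts that define the redundancy of definition~\ref{def:redundancy}: the $(n{+}1)$-st column of $\bA'$ equals the $n$-th column $\ba_n$ of $\bA$, and the new diagonal entry $\bA'_{n+1,n+1}=0$.

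For the forward direction I will write $\bp_\alpha'$ with last two entries $\alpha p_n$ and $(1-\alpha)p_n$ and compute $\bA'\bp_\alpha'$ blockwise. The top $n$ entries should reduce to $\bA\bp$, because the duplicated column lets the clone mass $(1-\alpha) p_n$ re-enter the product against the original $n$-th column of $\bA$, so the parameter $\alpha$ cancels identically. The bottom entry, computed against the row $(-\ba_n^\intercal,\,0)$ and using $\bA_{nn}=0$, should simplify to $(\bA\bp)_n$. Every coordinate of $\bA'\bp_\alpha'$ is therefore already a coordinate of $\bA\bp\le\bZ$, so $\bA'\bp_\alpha'\le\bZ$ for every $\alpha\in[0,1]$, i.e.\ every $\bp_\alpha'$ is Nash for $\bA'$, as claimed.

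The converse runs the same block computation in reverse: setting $\bp:=(p'_1,\ldots,p'_{n-1},\,p'_n+p'_{n+1})^\intercal$ collapses the clone mass back onto a single column, and each coordinate of $\bA\bp$ then matches some coordinate of $\bA'\bp'\le\bZ$ — the first $n{-}1$ entries match directly, while the $n$-th entry matches both of the last two entries of $\bA'\bp'$ (again after invoking $\bA_{nn}=0$). The only genuine obstacle is bookkeeping: tracking which block of $\bA'$ is multiplied by which slice of $\bp'$, and invoking $\bA_{nn}=0$ at the right moment. No deeper idea is required, because duplicating a row/column of an antisymmetric matrix preserves the Nash best-response inequalities verbatim and merely opens an extra degree of freedom for redistributing mass between a coordinate and its clone.
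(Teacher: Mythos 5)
Your proof is correct and follows essentially the same route as the paper's: characterize symmetric Nash equilibria of the zero-value antisymmetric game by a coordinatewise sign condition on $\bA\bp$, then verify that condition for $\bA'\bp_\alpha'$ (and conversely for the collapsed distribution) by a direct block computation that exploits the duplicated column together with $\bA_{nn}=0$. One remark: your sign convention $\bA\bp\preceq\bZ$ is the correct characterization for $\max_{\bp}\min_{\bq}\bp^\intercal\bA\bq$ (and is consistent with the Nash averages computed elsewhere in the paper being nonpositive), whereas the paper's proof writes $\bA\bp\succeq\bZ$ -- a sign slip there that does not affect the structure of the argument.
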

\begin{proof}
    Since the value of the game on $\bA$ is zero, it follows that $\bp$ is a Nash equilibrium iff all the coordinates of $\bA\bp$ are nonnegative, i.e. $\bA\bp\succeq \bZ_{n\times 1}$. Direct computation shows that $\bA'\bp'_\alpha\succeq\bZ_{(n+1)\times 1}$, which implies $\bp'_\alpha$ is a Nash equilibrium for $\bA'$. The converse follows similarly.
\end{proof}

Finally, we prove theorem~\ref{thm:main}.2.

\begin{proof}
    The definition and lemma are stated in the particular case where the last column and row are copied. This is simply for notational convenience. They generalize trivially to copying arbitrary row/columns into arbitrary positions, and can be applied inductively to the cases where $\bA'$ is constructed from $\bA$ by inserting multiple redundant copies.
\end{proof}

\subsection{Proof of theorem~\ref{thm:main}.2}

Recall the max-norm on matrices is $\|\bS\|_\text{max}:= \max_{ij}|\bS_{ij}|$. 

\begin{defn}[$\epsilon$-Nash equilibrium]
    A joint strategy $(\bp^*,\bq^*)$ is an $\epsilon$-Nash equilibrium for $\bA$ if the benefit from either player deviating, separately, is at most $\epsilon$:
    \begin{equation}
        \max_{\bp'} (\bp'-\bp^*)^\intercal\bA\bq^*\leq \epsilon
        \quad\text{and}\quad
        \max_{\bq'} (\bp^*)^\intercal\bA(\bq^* - \bq')\leq \epsilon.
    \end{equation}
\end{defn}

We are now ready to prove
\begin{enumerate}[P2]
    \item \textbf{Continuous:} If $\bp^*$ is a Nash for $\hat{\bA}$ and $\epsilon = \|\bA-\hat{\bA}\|_\text{max}$ then $\bp^*$ is an $\epsilon$-Nash for $\bA$. 
\end{enumerate}

\begin{proof}
    Suppose $(\bp^*,\bp^*)$ is a Nash equilibrium for the antisymmetric matrix $\hat{\bA}$. Observe that
    \begin{equation}
        (\bp'-\bp^*)^\intercal \bA \bp^*  = (\bp')^\intercal \hat{\bA} \bp^* + (\bp')^\intercal \bA \bp^* - (\bp')^\intercal \hat{\bA} \bp^*
    \end{equation}
    for any distribution $\bp'$ because $(\bp^*)^\intercal\bA\bp^*=0$ since $\bA$ is antisymmetric. It follows that 
    \begin{equation}
        \max_{\bp'} \Big\{(\bp'-\bp^*)^\intercal \bA \bp^*\Big\}
        \leq \max_{\bp'}\Big\{(\bp')^\intercal \hat{\bA} \bp^*\Big\} +
        \max_{\bp'}\Big\{(\bp')^\intercal( \bA -  \hat{\bA}) \bp^*\Big\}
    \end{equation}
    The first term on the right-hand-side is $\leq0$ since $\bp^*$ is a Nash equilibrium for $\hat{\bA}$ and the value of the game is zero. The second term on the right-hand-side is $\leq \epsilon$ because $\|\bA-\hat{\bA}\|_\text{max}\leq \epsilon$ and $\bp'$ and $\bp^*$ are probability distributions.
\end{proof}

Note that since $\|\bS\|_\text{max}\leq \|\bS\|_2\leq \|\bS\|_F$ for any $\bS$, the divergence from Nash is also controlled by how well $\hat{\bA}$ approximates $\bA$ in the operator or Frobenius norms.

The proof is adapted from the proof of the following lemma in \cite{Tuyls18}. 
\begin{lem}[Nash on approximate games]\label{lem:approx_nash}
    Suppose $(\bp^*,\bq^*)$ is a Nash equilibrium for $\hat{\bA}$ and that $\epsilon = \|\bA-\hat{\bA}\|_\text{max}$. Then $(\bp^*, \bq^*)$ is a $2\epsilon$-Nash for $\bA$. 
\end{lem}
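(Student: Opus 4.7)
The strategy is to reduce the $\epsilon$-Nash condition for $\bA$ to the exact Nash condition for $\hat{\bA}$ plus a small perturbation term that we control in max-norm. Since this lemma drops the antisymmetry assumption used in Theorem~\ref{thm:main}.2, we can no longer exploit $(\bp^*)^\intercal\bA\bp^*=0$ to kill a term, which is precisely why the bound loosens from $\epsilon$ to $2\epsilon$.

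\textbf{Step 1: set up the two deviation inequalities.} By definition of an $\epsilon$-Nash equilibrium, I need to show both $\max_{\bp'}(\bp'-\bp^*)^\intercal\bA\bq^*\leq 2\epsilon$ and $\max_{\bq'}(\bp^*)^\intercal\bA(\bq^*-\bq')\leq 2\epsilon$. The two inequalities are symmetric, so I would prove the first and remark that the second follows by an identical argument.

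\textbf{Step 2: insert $\hat\bA$ by adding and subtracting.} Write
\begin{equation}
    (\bp'-\bp^*)^\intercal\bA\,\bq^* \;=\; \underbrace{(\bp'-\bp^*)^\intercal\hat\bA\,\bq^*}_{(\text{I})} \;+\; \underbrace{(\bp'-\bp^*)^\intercal(\bA-\hat\bA)\,\bq^*}_{(\text{II})}.
\end{equation}
Term (I) is $\leq 0$ directly from the Nash condition for $\hat\bA$: the row player cannot improve by deviating from $\bp^*$ against $\bq^*$, i.e.\ $(\bp')^\intercal\hat\bA\bq^*\leq(\bp^*)^\intercal\hat\bA\bq^*$.

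\textbf{Step 3: bound the perturbation in max-norm.} For term (II), I would use the standard inequality $|\bu^\intercal\bM\bv|\leq \|\bu\|_1\,\|\bM\|_{\max}\,\|\bv\|_1$. Since $\bp'$ and $\bp^*$ are probability vectors, $\|\bp'-\bp^*\|_1\leq 2$; and $\|\bq^*\|_1=1$; and $\|\bA-\hat\bA\|_{\max}=\epsilon$ by hypothesis. Hence $|(\text{II})|\leq 2\epsilon$. Combining with Step 2 yields $(\bp'-\bp^*)^\intercal\bA\bq^*\leq 0+2\epsilon=2\epsilon$, uniformly in $\bp'$.

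\textbf{Step 4: symmetric argument for the column player, then conclude.} Repeat with $(\bp^*)^\intercal\bA(\bq^*-\bq')$ decomposed via $\hat\bA$; the column-player Nash inequality for $\hat\bA$ makes the first piece non-positive, and the same max-norm bound controls the second piece by $2\epsilon$. The hardest step is nothing deep --- it is just being careful that the loss of a factor of $2$ (relative to Theorem~\ref{thm:main}.2) is genuinely necessary here because, without antisymmetry, we cannot collapse $(\bp^*)^\intercal\hat\bA\bq^*$ to zero and must pay the full $\|\bp'-\bp^*\|_1\leq 2$ in the Hölder-style bound on term (II).
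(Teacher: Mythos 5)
Your proof is correct, and it is essentially the argument the paper has in mind: the paper does not prove this lemma itself (it cites \cite{Tuyls18}), but its proof of the sharper antisymmetric version in theorem~\ref{thm:main}.2 uses exactly your add-and-subtract decomposition through $\hat{\bA}$ with a max-norm bound on the perturbation term. You also correctly identify the source of the factor of $2$: without antisymmetry one cannot drop $(\bp^*)^\intercal\hat{\bA}\bq^*$ and pays $\|\bp'-\bp^*\|_1\leq 2$ rather than $\|\bp'\|_1=1$ in the H\"older bound.
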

Our result is slightly sharper because we specialize to antisymmetric matrices.

\subsection{Proof of theorem~\ref{thm:main}.3}
\begin{proof}
    \textbf{(i)}
    If $\dv(\bA)= \bZ$ then $\frac{1}{n}\bO^\intercal \bA = \bZ^\intercal$ and $\bA\frac{1}{n}\bO = \bZ$ implying the uniform distribution is a Nash equilibrium because there is no incentive to deviate from $(\frac{1}{n} \bO,\frac{1}{n}\bO)$. The uniform distribution also has maximum entropy. Conversely, suppose $\dv(\bA)\neq \bZ$. Then $\bO^\intercal\bA$ has at least one positive and one negative coordinate because we know $\bO^\intercal\bA\bO=0$ by antisymmetry. It follows that if the row player chooses $\frac{1}{n}\bO^\intercal$ then the column player is incentivized to choose a distribution with more mass on the positive coordinate and less on the negative. In other words, the column player will not play the uniform distribution, and the uniform distribution is therefore not a Nash equilibrium.
    
    \textbf{(ii)}
    By assumption $\bA=\grad(\br) = \br\bO^\intercal - \bO\br^\intercal$, so $\bp^\intercal\bA\bq = \bp^\intercal\br - \br^\intercal\bq$ decouples into an independent maximization problem with respect to $\bp$ and minimization problem with respect to $\bq$. It follows that the optimal distribution $\bp^*$ concentrates mass on the maximal coordinate(s) of $\br$ and so does $\bq^*$. That is, to be a Nash equilibrium, $\bp^*$ and $\bq^*$ must place their mass on the maximal coordinate if it is unique and can distribute it arbitrarily over the set of maximal coordinates if there is a tie. Adding the condition that the Nash equilibrium has maximum entropy entails placing the uniform distribution over the maximal coordinate(s).
\end{proof}

\section{Nash averaging for agent-vs-task}
\label{s:avt_meta}

Given score matrix $\bS_{m\times n}$, construct antisymmetric matrix 
\begin{align}
    \label{eq:naive_antisymmetrize}
    \bA_{(m+n)\times(m+n)} 
     = \left(\begin{matrix}
        \bZ_{m\times m} & \bS_{m\times n} \\
        -\bS^\intercal_{n\times m} & \bZ_{n\times n}
    \end{matrix}\right).
\end{align}
Note this differs from the antisymmetrization used in section~\ref{s:avt}, see next remark. 
\begin{rem}
    The graph structure underlying AvT is \emph{bipartite}: agents interact with tasks and tasks with agents, but there are no direct agent-agent or task-task interactions. When done in full generality, the definitions of $\dv$, $\grad$ and $\curl$ take into account the graph structure, see \cite{jiang:11}. In particular, $\dv$, $\grad$ and $\curl$ are computed differently on bipartite graphs than fully connected graphs -- note the definitions in section~\ref{s:anti} are specific to fully connected graphs. Working in full generality is overkill for our purposes. It suffices to introduce slightly \emph{ad hoc} notation to handle the specific case of AvT.
\end{rem}

Introduce the notation
\begin{equation}
    \dva(\bS) =\frac{1}{m}\bS\cdot \bO_{m\times 1} 
    \quad\text{and}\quad
    \dve(\bS) = -\frac{1}{n}\bS^\intercal \cdot\bO_{n\times 1},
\end{equation}
where $\dva(\bS)$ measures uniform \textbf{average skill of agents} on tasks and $\dve(\bS)$ measures uniform \textbf{average difficulty of tasks} for agents. Let
\begin{equation}
    \grad(\bs,\bd) = \bs\cdot\bO^\intercal - \bO\cdot \bd^\intercal.
\end{equation}
Define a two-player zero-sum meta-game
\begin{equation}
    \max_{(\bp_a, \bp_e)\in\Delta_m\times\Delta_n}
    \min_{(\bq_a, \bq_e)\in\Delta_m\times\Delta_n}
    \big(\bp_a;\bp_e\big)^\intercal \bA\big(\bq_a;\bq_e\big).
\end{equation}
The setup is the same as for AvA in the main text except the row and column meta-players each play two distributions: one on agents and one on tasks/environments. The same argument as in proposition~\ref{prop:MaxNE} shows there is a unique symmetric maxent Nash equilibrium $\Big((\bp_a^*, \bp_e^*), (\bp_a^*, \bp_e^*)\Big)$.
\begin{defn}
    The \textbf{maxent Nash evaluation method} for AvT is
    \begin{equation}
        \eval_{m}: 
        \big\{\text{evaluation data}\big\} = \big\{\text{antisymmetric matrices}\big\}\xrightarrow{\text{maxent NEs}} \left[\big\{\text{players}\big\}
        \xrightarrow{\text{Nash averages}} \bR\right],
    \end{equation}
    where $\bp_a^*$ and $\bp_e^*$ are the \textbf{maxent Nash equilibria} over agents and environments and  $\bn_e:= -\bS^\intercal\cdot\bp^*_a$ and $\bn_a:= \bS\cdot\bp^*_e$ are the \textbf{Nash averages} quantifying difficulty of environments (Nash averaged over agents) and skill of agents (Nash averaged over environments) respectively.
\end{defn}
Suppose $\bS$ decomposes as
\begin{equation}
    \bS =  \grad(\bs,\bd) + \tilde{\bS},
\end{equation}
where $\dva(\tilde{\bS}) = \bZ$ and $\dve(\tilde{\bS}) = \bZ$. Observe that 
\begin{equation}
    \big(\bp_a;\bp_e\big)^\intercal \bA\big(\bq_a;\bq_e\big) = 
    \bs^\intercal(\bp_a-\bq_a) + \bd^\intercal(\bp_e - \bq_e) + \bp_a^\intercal\tilde{\bS} \bq_e + \bq_a^\intercal \tilde{\bS} \bp_e
\end{equation}
If $\tilde{\bS} \equiv \bZ_{m\times n}$ then the game reduces to
\begin{equation}
    \big(\bp_a;\bp_e\big)^\intercal \bA\big(\bq_a;\bq_e\big) = 
    \bs^\intercal(\bp_a-\bq_a) + \bd^\intercal(\bp_e - \bq_e) 
\end{equation}
and so the row player maximizes its payoff by putting all its agent-mass on the most skillful agent(s) and all of its environment-mass on the most difficult task(s) -- and similarly for the column player.

It is easy to check that the maxent Nash $(\bp_a^*, \bp_e^*)$ has $\bp_a$ a uniform distribution on all agents iff $\dva(\bS) =\bZ$, and has $\bp_e$ a uniform distribution on tasks iff $\dve(\bS)=\bZ$. We thus obtain

\begin{thm}[main result for AvT]
    The maxent NE has the following properties:
    \vspace{-1mm}
    \begin{enumerate}[P1.]
        \item \textbf{Invariant:} Nash averaging is invariant to redundancies in $\bA$.
        \item \textbf{Continuous:} If $(\bp_a^*, \bp_e^*)$ is a Nash for $\hat{\bA}$ and $\frac{\epsilon}{2} = \|\bA-\hat{\bA}\|_\text{max}$ then $(\bp_a^*, \bp_e^*)$ is an $\epsilon$-Nash for $\bA$. 
        \item \textbf{Interpretable:}
        \textbf{(i)} The agent component of the maxent NE on $\bA$ is the uniform distribution on agents, $\bp_a^*=\frac{1}{n}\bO$, iff  $\dva(\bA)=\bZ$.\\
        \textbf{(ii)} The task component of the maxent NE on $\bA$ is the uniform distribution on tasks, $\bp_e^*=\frac{1}{m}\bO$, iff  $\dve(\bA)=\bZ$.\\
        \textbf{(iii)} If the meta-game is \emph{transitive}, i.e. $\bA=\grad(\bs, \bd)$, then the maxent NE is the uniform distribution on the most skillful agent(s) and the uniform distribution on the most difficult task(s) -- there could be ties.
    \end{enumerate}
\end{thm}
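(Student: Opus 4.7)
The plan is to mirror theorem~\ref{thm:main} for AvA, adapting each step to the bipartite product structure $\Delta_m \times \Delta_n$ of the AvT meta-game. The concatenated strategy $(\bp_a;\bp_e)$ plays the role of the single strategy $\bp$ in the AvA proofs, so most algebra carries over with minor modifications; the genuinely new ingredients are (a)~handling the two distinct flavors of redundancy (a duplicated agent versus a duplicated task) and (b)~tracking a factor of two arising from the off-diagonal block structure of $\bA$. Uniqueness of the maxent Nash on the product polytope follows from the same strictly-concave-entropy, closed-convex-symmetric argument as proposition~\ref{prop:MaxNE}, now with symmetry meaning invariance under swapping $(\bp_a,\bq_a)$ and $(\bp_e,\bq_e)$ jointly; I would state this as a preliminary lemma.

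For P1, I extend definition~\ref{def:redundancy} to the two cases: inserting a duplicate row of $\bS$ (redundant agent) or a duplicate column (redundant task). Because the payoff $\bp_a^\intercal \bS \bq_e - \bp_e^\intercal \bS^\intercal \bq_a$ is multilinear with block structure matching $\bA$, the proof of lemma~\ref{lem:copy_nash} transfers: if an agent is duplicated, any distribution that splits that agent's mass arbitrarily between the two copies (leaving $\bp_e$ unchanged) is still a Nash for the enlarged game, and conversely every Nash on the enlarged game collapses to a Nash on the original. Uniqueness of the maxent Nash then forces an equal split. The same argument for duplicated columns handles redundant tasks; the two kinds of redundancy commute, so they combine inductively.

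For P2, I reuse the identity from the AvA proof of continuity. Since $\bA$ is antisymmetric, $(\bp_a^*;\bp_e^*)^\intercal \bA (\bp_a^*;\bp_e^*) = 0$, hence for any deviation $(\bp_a';\bp_e')$,
\[
[(\bp_a';\bp_e') - (\bp_a^*;\bp_e^*)]^\intercal \bA\, (\bp_a^*;\bp_e^*) \;\leq\; (\bp_a';\bp_e')^\intercal \hat{\bA}\,(\bp_a^*;\bp_e^*) \;+\; (\bp_a';\bp_e')^\intercal (\bA - \hat{\bA})(\bp_a^*;\bp_e^*).
\]
The first term on the right is $\le 0$ because $(\bp_a^*, \bp_e^*)$ is a Nash for $\hat{\bA}$. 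Unpacking the second using the block form $\bA - \hat{\bA}$ produces two terms of the shape $\bp^\intercal (\bS - \hat{\bS})\bq$ with $\bp$ and $\bq$ probability distributions, each bounded by $\|\bA - \hat{\bA}\|_\text{max} = \epsilon/2$. Summing yields the required bound of~$\epsilon$, and the symmetric argument for the column meta-player's deviation closes the case.

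For P3, (i) and (ii) follow from block structure: $\dva(\bA) = \bZ$ is equivalent to $\bO_m^\intercal \bS = \bZ^\intercal$, which makes the uniform agent distribution yield zero payoff on the agent-block against every $\bp_e$, hence a best response; the converse repeats the deviation argument from theorem~\ref{thm:main}.3(i) (if $\dva(\bA) \neq \bZ$, the opposing player has a strictly profitable shift). Claim (iii) follows by direct computation: when $\bA = \grad(\bs, \bd)$ the payoff collapses to $(\bp_a - \bq_a)^\intercal \bs + (\bp_e - \bq_e)^\intercal \bd$, which decouples into two independent linear zero-sum sub-games already treated in theorem~\ref{thm:main}.3(ii), giving the uniform distribution on the argmax set of $\bs$ and separately on the argmax set of $\bd$. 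The hardest part, I expect, will be writing a clean AvT version of definition~\ref{def:redundancy} that covers both agent- and task-copies simultaneously and tracks the bookkeeping through the bipartite graph structure flagged in the remark; once those definitions are pinned down the rest is essentially a transcription of the AvA arguments.
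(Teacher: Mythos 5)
Your overall strategy---transcribe the AvA arguments onto the product polytope $\Delta_m\times\Delta_n$, with separate redundancy cases for duplicated agents (rows of $\bS$) and duplicated tasks (columns)---is exactly what the paper does; its own proof is little more than the payoff decomposition plus ``it is easy to check.'' Your P1 is right (lemma~\ref{lem:copy_nash} transfers to either block and maxent forces the equal split), and your P2 is more explicit than the paper's: the off-diagonal block structure of $\bA-\hat{\bA}$ yields exactly two bilinear terms of the form $\bp^\intercal(\bS-\hat{\bS})\bq$ with probability vectors, each bounded by $\|\bA-\hat{\bA}\|_{\text{max}}=\epsilon/2$, which is precisely where the factor of two in the statement comes from. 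P3(iii) via the decoupled payoff $\bs^\intercal(\bp_a-\bq_a)+\bd^\intercal(\bp_e-\bq_e)$ is also fine.

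The gap is in P3(i)--(ii). First, a bookkeeping error: $\bO^\intercal\bS=\bZ^\intercal$ (vanishing column sums) is the condition $\dve(\bS)=\bZ$, not $\dva(\bS)=\bZ$; the agent divergence concerns row sums, $\bS\cdot\bO=\bZ$. Second, and more seriously, ``uniform yields zero payoff on the agent block against every $\bp_e$, hence a best response'' is a non sequitur: a best response must \emph{maximize} $\bp_a^\intercal\bS\bp_e^*$, so the condition for the agent component of the Nash to be uniform is that $\bS\bp_e^*$ is a constant vector, where $\bp_e^*$ is the \emph{equilibrium} task distribution---the agent condition does not decouple from the task component. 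Indeed the componentwise ``iff'' fails as you have argued it: take $\bS=\left(\begin{smallmatrix}1&-1\\2&-2\end{smallmatrix}\right)$, which has total sum zero, $\dva(\bS)=\bZ$ but $\dve(\bS)\neq\bZ$. Task $2$ is strictly hardest against every agent mix, so $\bp_e^*=\be_2$, and against $\be_2$ agent $1$ strictly dominates, so the unique Nash has $\bp_a^*=\be_1$---not uniform, despite $\dva(\bS)=\bZ$. What your argument (and the paper's one-line assertion) actually supports is the joint statement: the maxent NE is uniform on agents \emph{and} uniform on tasks iff $\dva(\bS)=\bZ$ and $\dve(\bS)=\bZ$ simultaneously. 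You should either prove that joint version or add the hypothesis that the other component is uniform before running the deviation argument from theorem~\ref{thm:main}.3(i).
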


\section{Code for computing mElo$_2$ updates}
\label{s:code}

The routine $\mathtt{mElo2\_update}$ takes as input: a pair of players $i,j$, the probability $\mathtt{p\_ij}$ of player $\mathtt{i}$ beating player $\mathtt{j}$ (which could be 0 or 1 if only a single match is observed on the given round), the rating vector $\mathtt{r}$ and the $n\times2$ matrix $\mathtt{c}$ quantifying non-transitive interactions. It returns updates to the $\mathtt{i^{th}}$ and $\mathtt{j^{th}}$ entries of $\mathtt{r}$ and $\mathtt{c}$.

$\mathtt{\texttt{def mElo2\_update}(i,\, j,\, p\_ij,\, r,\, c):}$\\
$\texttt{}\quad\mathtt{p\_hat\_ij = sigmoid(r[i] - r[j] + c[i,0] * c[j,1] - c[j,0] * c[i,1])}$\\
$\texttt{}\quad\mathtt{delta = p\_ij - p\_hat\_ij}$\\
$\texttt{}\quad\mathtt{r\_update = [\,16 * delta,\, -16 * delta\,]}$\\
$\texttt{}\quad\quad\text{\# \texttt{r} has higher learning rate than \texttt{c}}$\\
$\texttt{}\quad\mathtt{c\_update = [}$\\
$\texttt{}\quad\quad\mathtt{[\, +delta * c[j,1],\, -delta * c[i,1]\,],}$\\
$\texttt{}\quad\quad\mathtt{[\,-delta * c[j,0],\,  + delta * c[i,0]\,]}$\\
$\texttt{}\quad\mathtt{]}$\\
$\texttt{}\quad\mathtt{return\,\, r\_update,\, c\_update}$\\

\section{On the geometry of antisymmetric matrices}
\label{s:geom_melo}
 
 This section provides some intuition for multidimensional Elo ratings by describing some of the underling geometry.

\begin{figure}[t]
    \center
    \includegraphics[width=.99\textwidth]{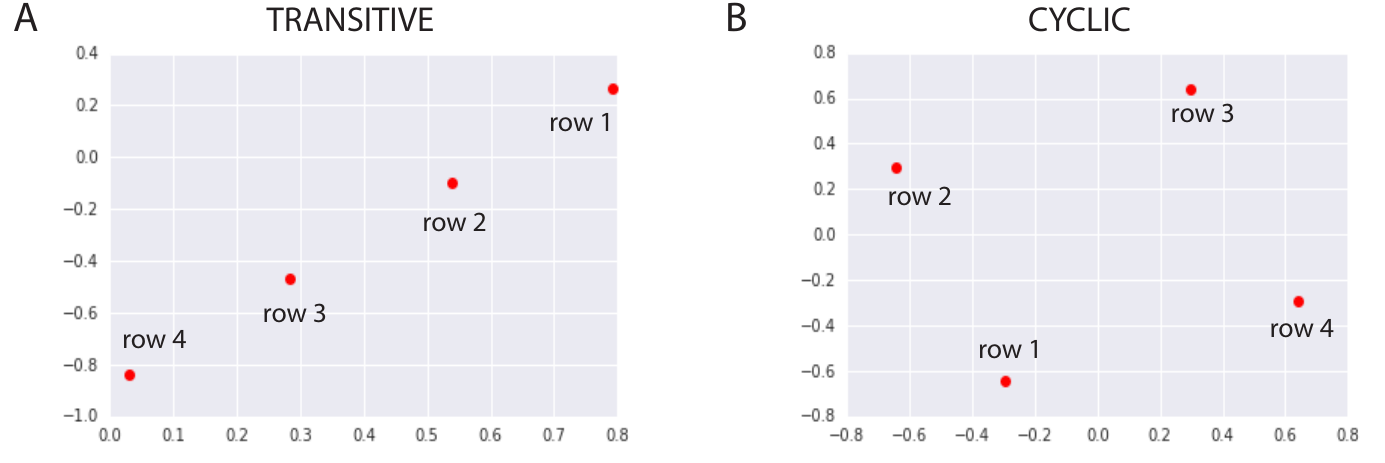}
    \vspace{-3mm}
    \caption{\textbf{Visualizing Schur decompositions.} (A) Rows of $\bQ^\bT_{4\times 2}$ form a straight line, reflecting the transitive structure of $\bT$. (B): Rows of $\bQ^\bC_{4\times 2}$ lie on a circle centered at the origin.
    }
    \label{f:vis_schur}
\end{figure}

\subsection{Visualizing the Schur decomposition}

Consider the following logit matrices:
    \begin{center}
        \begin{tabular}{ c|c c c c| } 
            $\bT$ & $A$ & $B$ & $C$ & $D$ \\ 
            \hline
            $A$ & 0 & 1 & 2  & 3\\ 
            $B$ & -1 & 0 & 1 & 2 \\ 
            $C$ & -2 & -1 & 0 & 1\\ 
            $D$ & -3 & -2 & -1 & 0\\ 
            \hline
        \end{tabular}
        \quad\text{and}\quad
        \begin{tabular}{ c|c c c c| } 
            $\bC$ & $A$ & $B$ & $C$ & $D$ \\ 
            \hline
            $A$ & 0 & 1 & 0  & -1\\ 
            $B$ & -1 & 0 & 1 & 0 \\ 
            $C$ & 0 & -1 & 0 & 1\\ 
            $D$ & 1 & 0 & -1 & 0\\ 
            \hline
        \end{tabular}
    \end{center}
Note that $\bT$ is transitive since $\bT = \grad\circ \dv(\bT)$ and $\bC$ is cyclic since $\dv(\bC)=\bZ$. Both matrices have rank two, so the thin Schur decomposition can be written
\begin{equation}
    \bQ_{4\times 2}\bLambda_{2\times 2}\bQ^\intercal_{2\times4}
\end{equation}
in either case. The matrices $\bQ^\bT_{4\times 2}$ and $\bQ^\bC_{4\times 2}$ arising from the respective Schur decompositions 
\begin{equation}
    \bQ^\bT_{4\times 2} = \left(\begin{matrix}
        0.793 &  0.267 \\
        0.538 & -0.101 \\
        0.284 & -0.469 \\
        0.029 & -0.836
    \end{matrix}\right)
    \quad\text{and}\quad
    \bQ^\bC_{4\times 2} = \left(\begin{matrix}
        -0.296 & -0.642 \\
        -0.642 &  0.296 \\
        0.296 &  0.642\\
         0.642 & -0.296
    \end{matrix}\right)
\end{equation}
can separately be thought of (and plotted) as four two-vectors, see figure~\ref{f:vis_schur}. The rows of $\bQ^\bT_{4\times 2}$ form a straight line, see panel A. More generally, if $\bA=\grad(\br)$ and $\br$ is not identically zero then $\bA$ is rank two with thin Schur decomposition $\bA=\bQ_{n\times 2}\bLambda_{2\times2}\bQ^\intercal_{2\times n}$, where the rows of $\bQ$ are points along a line. More precisely, the rows are of the form
\begin{equation}
    \bq_{k} = \ba + \alpha_k \bbb
    \quad\text{for }k\in\{1,\ldots, n\}
\end{equation}
where $\ba,\bbb\in\bR^2$ and $\alpha_k\in\bR$. 

The rows of $\bQ^\bC_{4\times 2}$ lie on a circle centered at the origin, see panel B. This does not generalize to arbitrary cyclic matrices. However, the geometry of antisymmetric matrices has interesting connections with areas of parallelograms and the complex plane, see next subsection.

\begin{figure}[t]
    \center
    \includegraphics[width=.99\textwidth]{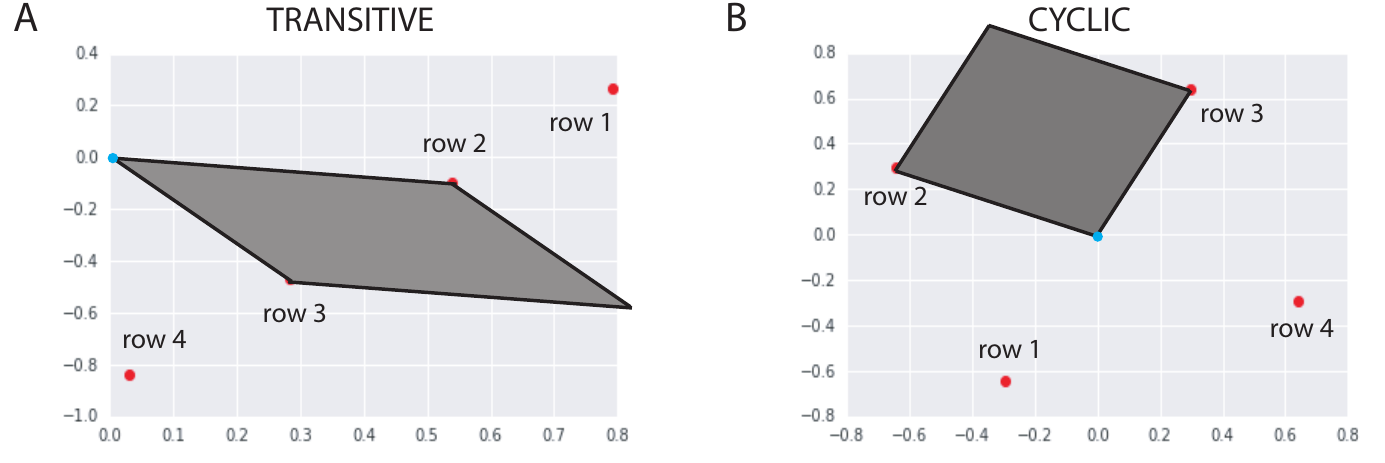}
    \vspace{-3mm}
    \caption{\textbf{Visualizing logits.} The entry $\bA_{ij}$ of $\bA$ is $\lambda$ times the \emph{signed area} of the parallelogram covered by the origin $(0,0)$, $\bQ_{i, \bullet}$, $\bQ_{i, \bullet}+\bQ_{j,\bullet}$ and $\bQ_{j,\bullet}$, where $\bQ_{i, \bullet}$ and $\bQ_{j, \bullet}$ are vectors corresponding to row $i$ and row $j$ of $\bQ_{4\times 2}$. 
    }
    \label{f:vis_logits}
\end{figure}

\subsection{Areas and phases}
Observe that
\begin{equation}
    \label{eq:signed_area}
    \left(\begin{matrix}u_1 & u_2\end{matrix}\right)
    \left(\begin{matrix}0 & \lambda \\ -\lambda & 0\end{matrix}\right)
    \left(\begin{matrix}v_1\\v_2\end{matrix}\right)
    = \lambda\cdot (u_1 v_2 - u_2 v_1),
\end{equation}
which is $\lambda$ times the signed area of the parallelogram covered by $(0,0)$, $\bu$, $\bu+\bv$, and $\bv$. It follows that the Schur decomposition breaks $\bR^n$ into an orthogonal collection of two-dimensional spaces -- one for each $\pm\lambda_i$ block -- and that the entries $\bA_{ij}$ of $\bA$ are sums of signed areas of parallelograms, one  parallelogram per two-dimensional subspace. The case where there is a single two-dimensional subspace (since the rank of $\bA$ is two) is illustrated in figure~\ref{f:vis_logits}.

Alternatively, introduce complex numbers $w =u_1 + iu_2$ and $z = v_1 + i v_2$, where $w = |w|\cdot e^{i\cdot \phi_w}$ and $z = |z|\cdot e^{i\cdot \phi_z}$ in polar coordinates. Then,  \eqref{eq:signed_area} can be rewritten
\begin{equation}
    \left(\begin{matrix}u_1 & u_2\end{matrix}\right)
    \left(\begin{matrix}0 & \lambda \\ -\lambda & 0\end{matrix}\right) = \text{Im}(w\cdot \bar{z}) = |w|\cdot |z| \sin(\phi_w-\phi_z).
\end{equation}

\subsection{Comment on Schur and Hodge}

The Schur decomposition is not guaranteed to be compatible with the Hodge decomposition. That is, $\grad(\bA)$ is not necessarily the span of two rows of the matrix $\bQ_{n\times r}$ arising in the Schur decomposition. Roughly, this happens when $\tilde{\bA}:= \bA-\grad\circ\dv(\bA)$ is not a good $(\text{rank}_\bA-2)$-approximation to $\bA$ in the $\|\cdot\|_2$-norm.

We recommend to first extract the transitive component and then perform the Schur decomposition on $\tilde{\bA}=\rot(\bA)$. Although this may not always be optimal with respect to the $\|\cdot\|_2$-norm, it has the important advantage that $\dv(\bA)$ is readily understood by humans as a measure of average performance.

\begin{figure}[t]
    \center
    \includegraphics[width=.85\textwidth]{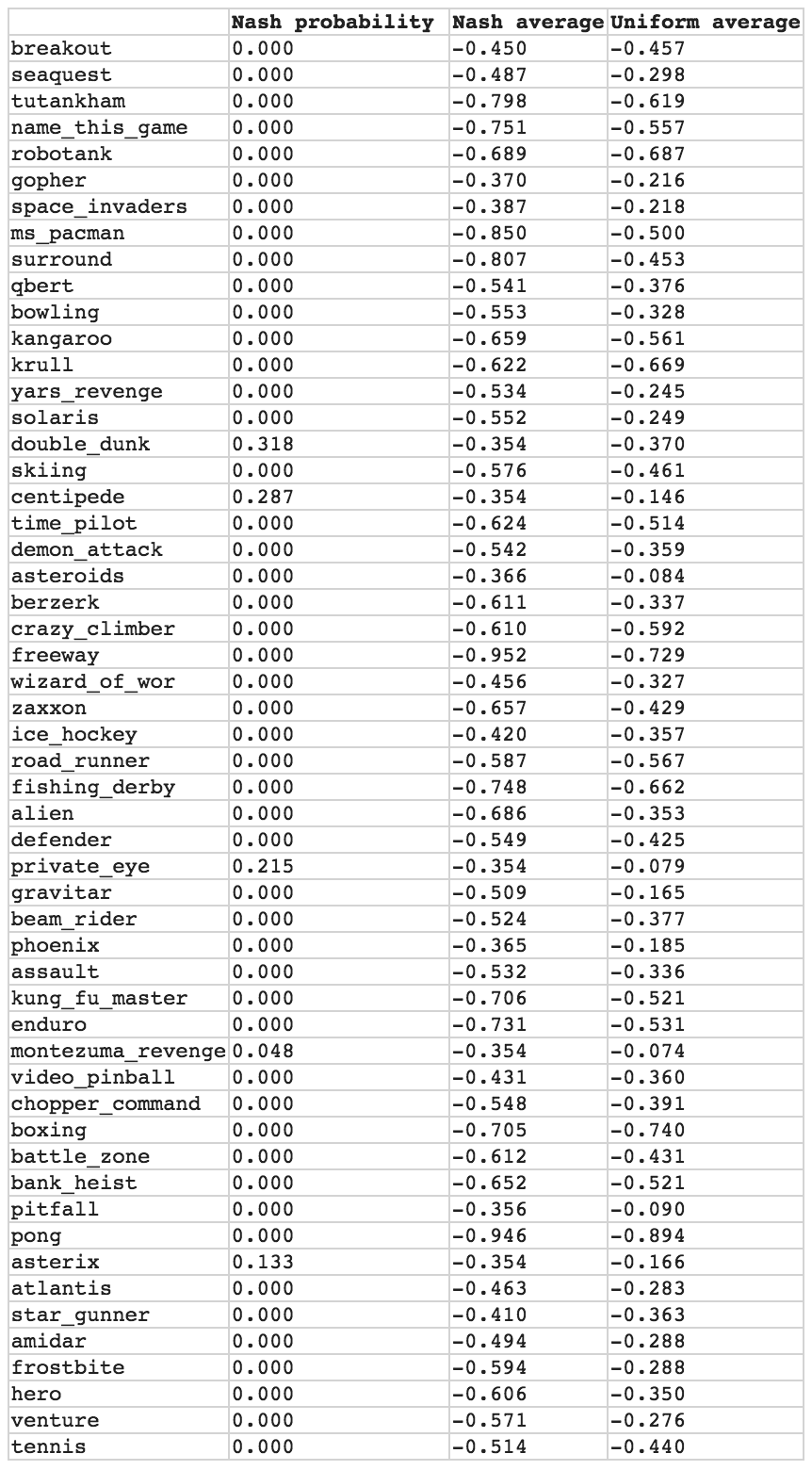}
    \vspace{-3mm}
    \caption{Evaluation of environments.
    }
    \label{f:envs}
\end{figure}

\begin{figure}[t]
    \center
    \includegraphics[width=.85\textwidth]{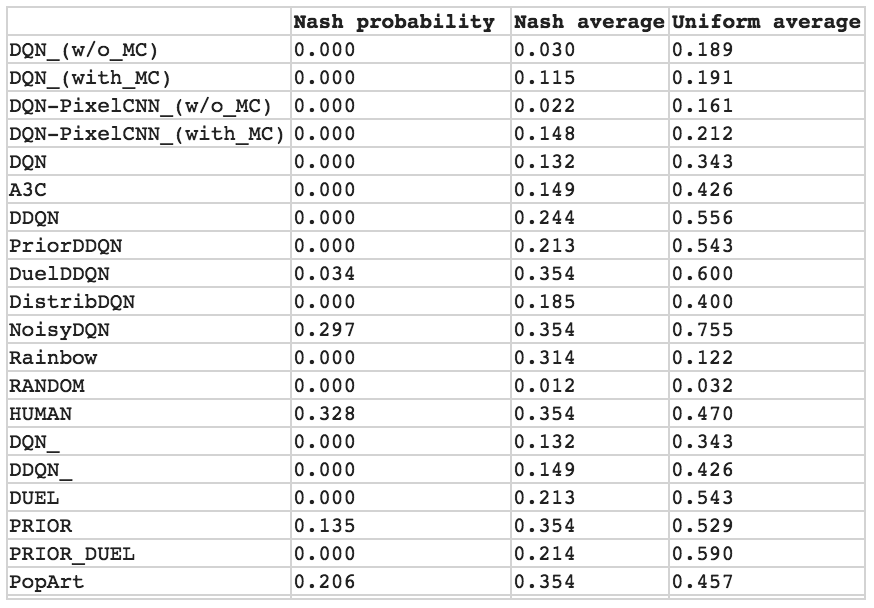}
    \vspace{-3mm}
    \caption{Evaluation of agents. Note, there are redundancies since agents are taken from multiple papers; these are ignored by Nash averaging.
    }
    \label{f:agents}
\end{figure}

\end{document}